\def\method{FourierFlow}
\def\eqref#1{equation~\ref{#1}}
\def\1{\bm{1}}
\def\rvepsilon{{\mathbf{\epsilon}}}
\def\rvs{{\mathbf{s}}}
\def\rvv{{\mathbf{v}}}
\def\rvx{{\mathbf{x}}}
\def\rvy{{\mathbf{y}}}
\def\vmu{{\bm{\mu}}}
\def\vs{{\bm{s}}}
\def\vx{{\bm{x}}}
\def\vy{{\bm{y}}}
\def\mI{{\bm{I}}}
\def\mSigma{{\bm{\Sigma}}}
\DeclareMathAlphabet{\mathsfit}{\encodingdefault}{\sfdefault}{m}{sl}
\SetMathAlphabet{\mathsfit}{bold}{\encodingdefault}{\sfdefault}{bx}{n}
\newcommand{\softmax}{\mathrm{softmax}}
\newtheorem{theorem}{Theorem}
\newtheorem{lemma}{Lemma}
\title{\method: Frequency-aware Flow Matching \\ for Generative Turbulence
Modeling}
\author{
\textbf{Haixin Wang}$^{1}$, \textbf{Jiashu Pan}$^{2}$, \textbf{Hao Wu}$^3$, \textbf{Fan Zhang}$^4$, \textbf{Tailin Wu}$^{2,\dagger}$  \\
$^1$Department of Computer Science, University of California, Los Angeles,  \\
$^2$Department of Engineering, Westlake University,\\
$^3$Tsinghua University, $^4$ The Chinese University of Hong Kong \\
\texttt{whx@cs.ucla.edu,wutailin@westlake.edu.cn}
}
\begin{document}

\makeatletter
\DeclareRobustCommand\onedot{\futurelet\@let@token\@onedot}
\def\@onedot{\ifx\@let@token.\else.\null\fi\xspace}

\def\eg{\emph{e.g.}\xspace} \def\Eg{\emph{E.g}\onedot}
\def\ie{\emph{i.e.}\xspace} \def\Ie{\emph{I.e}\onedot}
\def\vs{\emph{vs.}\xspace}
\def\cf{\emph{c.f}\onedot} \def\Cf{\emph{C.f}\onedot}
\def\etc{\emph{etc}\onedot} \def\vs{\emph{vs}\onedot}
\def\wrt{w.r.t\onedot} \def\dof{d.o.f\onedot}
\def\etal{\emph{et al}\onedot}
\def\viz{\emph{viz}\onedot}

\makeatother

\newcommand{\cmark}{\ding{51}}%
\newcommand{\xmark}{\ding{55}}%

\def\rvx{{\mathbf{x}}}
\def\rvy{{\mathbf{y}}}
\def\vx{{\bm{x}}}
\def\vy{{\bm{y}}}
\def\vmu{{\bm{\mu}}}
\def\mSigma{{\bm{\Sigma}}}
\def\mI{{\bm{I}}}

\newcommand{\stwo}{{S$^2$}\xspace}
\newcommand{\sthree}{{S$^3$}\xspace} 
\newcommand{\stwobf}{{\textbf{S$\mathbf{^2}$}}\xspace}
\newcommand{\sthreebf}{{\textbf{S$\mathbf{^3}$}}\xspace} 
\newcommand{\stwowrapper}{{S$^2$-Wrapper}\xspace}
\newcommand{\sthreewrapper}{{S$^3$-Wrapper}\xspace}
\newcommand{\diff}{\mathop{}\!\mathrm{d}}

\definecolor{Gray}{gray}{0.9}
\definecolor{darkgreen}{RGB}{5, 73, 7}
\definecolor{lightgreen}{HTML}{9CCBB8}
\definecolor{lightred}{HTML}{E3242B}
\definecolor{lightorange}{HTML}{ED7D31}
\definecolor{lightblue}{HTML}{d2e6fa}

\newcommand\minisection[1]{\vspace{1.3mm}\noindent \textbf{#1}}

\newcommand{\todo}[1]{{\color{purple}{\bf TODO: #1}}}
\newcommand{\td}[1]{{\color{blue}{\bf TD: #1}}}
\newcommand{\bs}[1]{{\color{red}{\bf BS: #1}}}

\setcounter{topnumber}{3}

\maketitle
\begingroup
\renewcommand\thefootnote{}\footnotetext{Work done by Haixin Wang as a visiting student at Westlake University.}
\addtocounter{footnote}{-1}
\endgroup
\renewcommand{\thefootnote}{$\dagger$}
\footnotetext{Corresponding author.}
\renewcommand{\thefootnote}{\arabic{footnote}}  

\begin{abstract}
Modeling complex fluid systems, especially turbulence governed by partial differential equations (PDEs), remains a fundamental challenge in science and engineering. 
Recently, diffusion-based generative models have gained attention as a powerful approach for these tasks, owing to their capacity to capture long-range dependencies and recover hierarchical structures. 
However, we present both empirical and theoretical evidence showing that generative models struggle with significant spectral bias and common-mode noise when generating high-fidelity turbulent flows. Here we propose \method{}, a novel generative turbulence modeling framework that enhances the frequency-aware learning by both implicitly and explicitly mitigating spectral bias and common-mode noise. \method{} comprises three key innovations. Firstly, we adopt a dual-branch backbone architecture, consisting of a salient flow attention branch with local-global awareness to focus on sensitive turbulence areas. Secondly, we introduce a frequency-guided Fourier mixing branch, which is integrated via an adaptive fusion strategy to explicitly mitigate spectral bias in the generative model. Thirdly, we leverage the high-frequency modeling capabilities of the masked auto-encoder pre-training and implicitly align the features of the generative model toward high-frequency components.
We validate the effectiveness of \method{} on three canonical turbulent flow scenarios, demonstrating superior performance compared to state-of-the-art methods. Furthermore, we show that our model exhibits strong generalization capabilities in challenging settings such as out-of-distribution domains, long-term temporal extrapolation, and robustness to noisy inputs.
The code can be found at \url{https://github.com/AI4Science-WestlakeU/FourierFlow}.
\end{abstract}

\section{Introduction}

Diffusion and flow models~\cite{song2019generative,ho2020denoising,lipman2022flow} have emerged as a powerful framework for generating high-fidelity data.
By iteratively denoising noisy inputs, they can effectively capture both the global structures and fine-grained details of complex data, demonstrating clear advantages not only in modeling visual signals~\cite{rombach2021highresolution,ho2022video} but also in fluid dynamics~\cite{huang2024diffusionpde,oommen2024integrating,wang2024recent}.
A recent study~\cite{khodakarami2025mitigating} further highlights that diffusion model outperforms the standard Neural Operator (NO) by inherently learning to perturb and reconstruct signals across multiple scales, rather than operating on a fixed scale throughout training, as NO does.
It enables a generative model to more faithfully recover the hierarchical structures of the underlying fluid systems.
Given the intrinsically multi-scale and high-dimensional nature, generative model emerges as a promising paradigm for simulating fluid flow.

However, despite recent advances, generative models have not yet been widely adopted for turbulence modeling tasks. Rapid variations in local regions of turbulence typically correspond to significant fluctuations in the high-frequency components of the signal spectrum.
This naturally raises an important question:\textbf{\textit{
Can standard generative model achieve high-fidelity turbulence modeling, providing scalability and generalization capabilities across diverse flow regimes?}} 
We believe the answer is negative due to fundamental limitations in two respects. 

\begin{wrapfigure}{r}{0.5\textwidth}
\centering
\vspace{-0.3cm}
\includegraphics[width=0.5\textwidth]{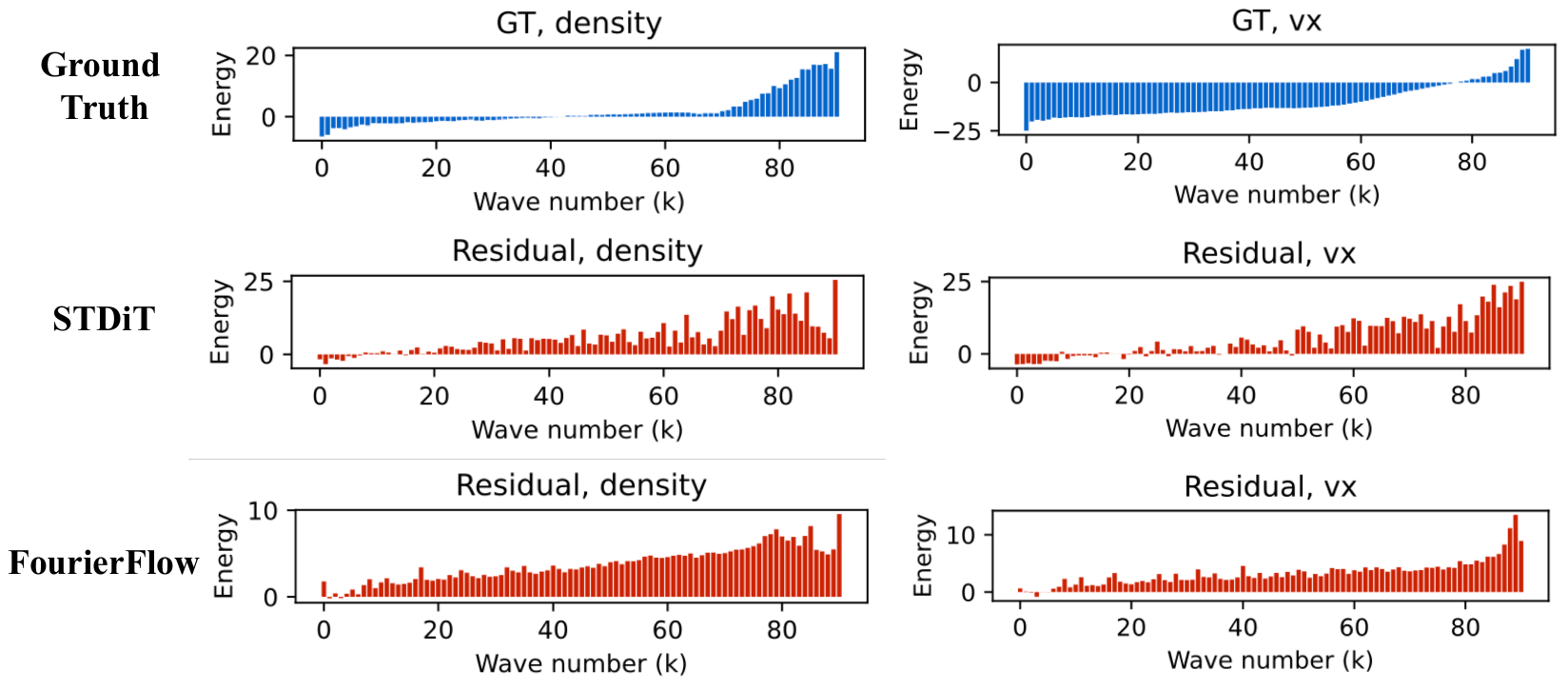}
\vspace{-0.5cm}
\caption{Demonstration of \emph{spectral bias}. Residual is the gap between the prediction and the ground truth. Energy denotes the logarithm of the spectral energy, while the wavenumber indicates the frequency scale in the spectral domain. STDiT~\cite{opensora2} fits the low-frequency components well but exhibits larger errors in the high-frequency range, resulting in residual spectra that are heavily concentrated in the high-wavenumber region. In contrast, our \method{} produces a more balanced residual spectrum by reducing spectral bias.} 
\label{fig:spectral}
\vspace{-0.3cm}
\end{wrapfigure}

\textbf{(1) \underline{Spectral bias}.} We provide two strong pieces of evidence. \emph{From an empirical perspective}, we observe through extensive spectral analysis that these models tend to underrepresent high-frequency and high-energy components, which are critical for accurately reconstructing fine-scale turbulent structures such as vortices, shocks, and shear layers.
\emph{From a theoretical standpoint}, this phenomenon can be attributed to the intrinsic dynamics of the generation process. Specifically, during the forward process, high-frequency components in the input data are corrupted more severely and earlier than low-frequency components due to their smaller signal-to-noise ratios. As a result, the reverse denoising process, which typically proceeds from low to high frequencies, inherently favors reconstruction of coarse-scale structures first. 

\textbf{(2) \underline{Common-mode noise in attention}.} Differential attention~\cite{ye2024differential} has shown that attention mechanisms can be affected by noise, often attending to irrelevant contextual background. More critically, highly nonlinear and high-frequency variations in localized turbulence regions. These small-scale yet dynamically critical structures are frequently averaged out or diluted at the global level, becoming what we refer to as common-mode components~\cite{laplante2018comprehensive}. Therefore, existing generative models suffer from the slow evolution of the global flow field, which can obscure strongly dynamic local structures such as vortices and shear layers.

Moreover, turbulence modeling differs substantially from natural image generation in terms of structural requirements. While images may tolerate perceptual smoothing, fluid dynamics demands strict preservation of energy across scales to maintain physical consistency. In this context, the inability of diffusion models to faithfully reconstruct high-frequency modes not only reduces visual fidelity but also undermines physical plausibility, stability, and generalization.

To overcome the limitations, we propose \textbf{\method{}}, a novel frequency-aware generative framework for turbulence modeling. \method{} incorporates both architectural and optimization innovations to explicitly and implicitly address spectral bias and common-mode noise. 
Specifically, \textbf{(1)} we design a novel Salient Flow Attention (SFA) mechanism to capture local-global spatial dependencies while suppressing irrelevant background signals. \textbf{(2)} We design a Frequency-guided Fourier Mixing (FFM) branch, forming a dual-branch backbone architecture to enhance the model’s capacity for high-frequency components.
These branches are fused adaptively to facilitate dynamic frequency signal integration. \textbf{(3)} On the optimization side, we leverage a pre-trained Masked Autoencoder (MAE) surrogate model to guide the generative model's alignment. Specifically, we match the intermediate representations of the generative model with those of a pre-trained MAE that is more sensitive to high-frequency features, thereby encouraging the generator to recover finer-scale structures.

Our empirical evaluation spans three canonical turbulence scenarios across both compressible and incompressible N-S flows. \method{} consistently outperforms state-of-the-art baselines in terms of accuracy, physical consistency, and frequency reconstruction. Notably, we demonstrate the model's strong generalization capabilities under challenging settings such as long-horizon temporal extrapolation, out-of-distribution flow regimes, and noisy observations.

\section{Preliminary}

\subsection{Problem Formulation}
\vspace{-1ex}

\begin{figure*}
  \centering
  \includegraphics[width=0.9\linewidth]{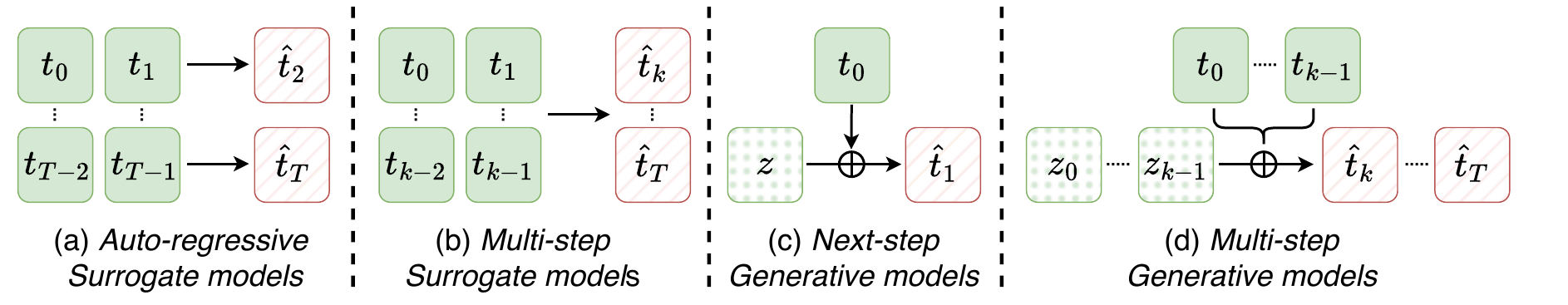}
  \vspace{-0.05in}
  \caption{Comparison of existing architectures for turbulence modeling.
  }   
  \label{fig:compare}
\vspace{-0.2in}
\end{figure*}

Turbulent flow is always modeled by a governing PDE equation. We consider a PDE on $[0,T]\times \mathcal{D}\subset \mathbb{R}\times\mathbb{R}^d$ with the following form:
\begin{align}
   \frac{\partial u}{\partial t} &= \mathcal{F}\left(u, \frac{\partial u}{\partial x}, \frac{\partial^2 u}{\partial x^2}, \dots\right) +  f(\tau,x), \quad (\tau, x) \in [0, T] \times  \mathcal{D}, \label{eq:PDE} \\
   u(0, x) &= u_0(x), \quad x \in D, \qquad B[u](\tau, x) = 0, \quad  \, (\tau, x) \in [0, T] \times \partial \mathcal{D}, \nonumber
\end{align}
where $u : [0, T] \times \mathcal{D} \to \mathbb{R}^n$ is the solution, with the initial condition $u_0(x)$ at time $t = 0$ and boundary condition $B[u](\tau, x) = 0$ on the boundary. $\mathcal{F}$ is a function and $f(\tau,x)$ is the force term.
Let $u(\tau,\cdot) \in \mathbb{R}^{d \times n}$ denote the state at time step $t$. We are implementing multi-step generative modeling shown in Figure \ref{fig:compare}. Formally, an initial noisy trajectory of length $k$ is denoted as:
\[
\mathcal{U}_0^k = \{u_0, u_1, \dots, u_{k-1}\}, \quad u_\tau \sim p(u_\tau \mid u_{<\tau}, \boldsymbol{\epsilon}_\tau),
\]
where $\boldsymbol{\epsilon}_\tau \sim \mathcal{N}(0, \sigma_\tau^2 I)$ represents stochastic perturbations capturing turbulence-induced uncertainty.
To simulate the future evolution of the turbulent system, we aim to generate a predictive trajectory $\hat{\mathcal{U}}_k^{2k} = \{\hat{u}_k, \hat{u}_{k+1}, \dots, \hat{u}_{2k-1}\}$ directly from Gaussian noise:
$\hat{\mathcal{U}}_k^{2k} = \mathcal{G}_\theta(\boldsymbol{\epsilon}_{k:2k-1}, \mathcal{U}_0^k)$, 
where $\boldsymbol{\epsilon}_{k:2k-1} = \{\boldsymbol{\epsilon}_k, \dots, \boldsymbol{\epsilon}_{2k-1}\}$ with $\boldsymbol{\epsilon}_\tau \sim \mathcal{N}(0, \sigma_\tau^2 I)$, and $\mathcal{G}_\theta$ is a learnable generative model parameterized by $\theta$, conditioned on the observed past $\mathcal{U}_0^k$. $k$ is set to 4 in our paper.

\vspace{-1ex}
\subsection{Diffusion and Flow-based Generation}
\label{sec:diff}

Generative modeling of complex systems can be broadly categorized into diffusion-based and flow-based approaches. A representative diffusion model is the Denoising Diffusion Probabilistic Model (DDPM)~\citep{ho2020denoising}, which defines a forward noising process and a reverse denoising process. In the forward process, clean data $\mathbf{x}_0$ is gradually corrupted into Gaussian noise $\mathbf{x}_K \sim \mathcal{N}(\mathbf{0}, \mathbf{I})$ via a Markov chain using the Gaussian transition kernel $q(\mathbf{x}_{k+1}|\mathbf{x}_k) = \mathcal{N}(\mathbf{x}_{k+1}; \sqrt{\alpha_k} \mathbf{x}_k, (1-\alpha_k)\mathbf{I})$, where $\{\alpha_k\}_{k=1}^K$ is the variance schedule. The reverse process starts from noise and denoises iteratively via a model $\boldsymbol{\epsilon}_\theta$ that predicts the mean $\mu_\theta(\mathbf{x}_k, k)$ of the reverse transition $p_\theta(\mathbf{x}_{k-1}|\mathbf{x}_k) = \mathcal{N}(\mathbf{x}_{k-1}; \mu_\theta(\mathbf{x}_k, k), \sigma_k \mathbf{I})$. The model is trained using the following simplified variational objective where $\Bar{\alpha}_k = \prod_{i=1}^k \alpha_i$:
\begin{align}
\label{eq:diff_loss}
\mathcal{L}_{\text{diffusion}}=\mathbb{E}_{k\sim U(1,K),\,\mathbf{x}_0\sim p(x),\,\boldsymbol{\epsilon} \sim \mathcal{N}(\mathbf{0},\mathbf{I})}[\|\boldsymbol{\epsilon} - \boldsymbol{\epsilon}_{\theta}(\sqrt{\bar{\alpha}_k}\mathbf{x}_0 +  \sqrt{1-\bar{\alpha}_k} \boldsymbol{\epsilon},k)\|_2^2]
\end{align}

To enable conditional generation, diffusion models can incorporate guidance, including classifier-based~\citep{du2023reduce} and classifier-free methods~\citep{ho2022classifier, ajay2022conditional}. The latter uses a shared model to learn both conditional and unconditional scores and interpolates them at generation time with a weight $\omega \in [0, 1]$.

In contrast, conditional flow matching~\citep{lipman2022flow} eliminates the stochastic, iterative nature of diffusion by learning a continuous-time velocity field $\mathbf{v}_\theta(\mathbf{x}, t)$ that deterministically transports samples from a base distribution $p_0$ (\emph{e.g.}, Gaussian noise) to the target data distribution $p_{\text{data}}$ using an ordinary differential equation in the formulation of:
\begin{align}
\frac{d\mathbf{x}(t)}{dt} = \mathbf{v}_\theta(\mathbf{x}(t), t), \quad \mathbf{x}(0) \sim p_0, \quad \mathbf{x}(1) \sim p_{\text{data}}.
\end{align}
The model is trained by matching $\mathbf{v}_\theta$ to a target velocity $\mathbf{v}^*(\mathbf{x}, t)$ derived from an interpolation (\emph{e.g.}, linear) between data samples and base noise:
\begin{align}
\label{eq:flow_loss}
\mathcal{L}_{\text{flow}} = \mathbb{E}_{t \sim \mathcal{U}(0,1),\, \mathbf{x}(t)} \left[ \left\| \mathbf{v}_\theta(\mathbf{x}(t), t) - \mathbf{v}^*(\mathbf{x}(t), t) \right\|^2 \right],
\end{align}
where $\mathbf{x}(t) = (1 - t)\mathbf{x}_0 + t \mathbf{x}_1$, $\mathbf{x}_0 \sim p_0$, and $\mathbf{x}_1 \sim p_{\text{data}}$.
While diffusion models benefit from strong generation quality and uncertainty modeling, their discrete denoising steps can be computationally expensive and spectrally biased. Flow matching provides an efficient alternative by modeling sample transport continuously in time, making it particularly attractive for physics-informed applications such as turbulent flow generation.

\section{Method}

\begin{figure*}
  \centering
  \includegraphics[width=0.9\linewidth]{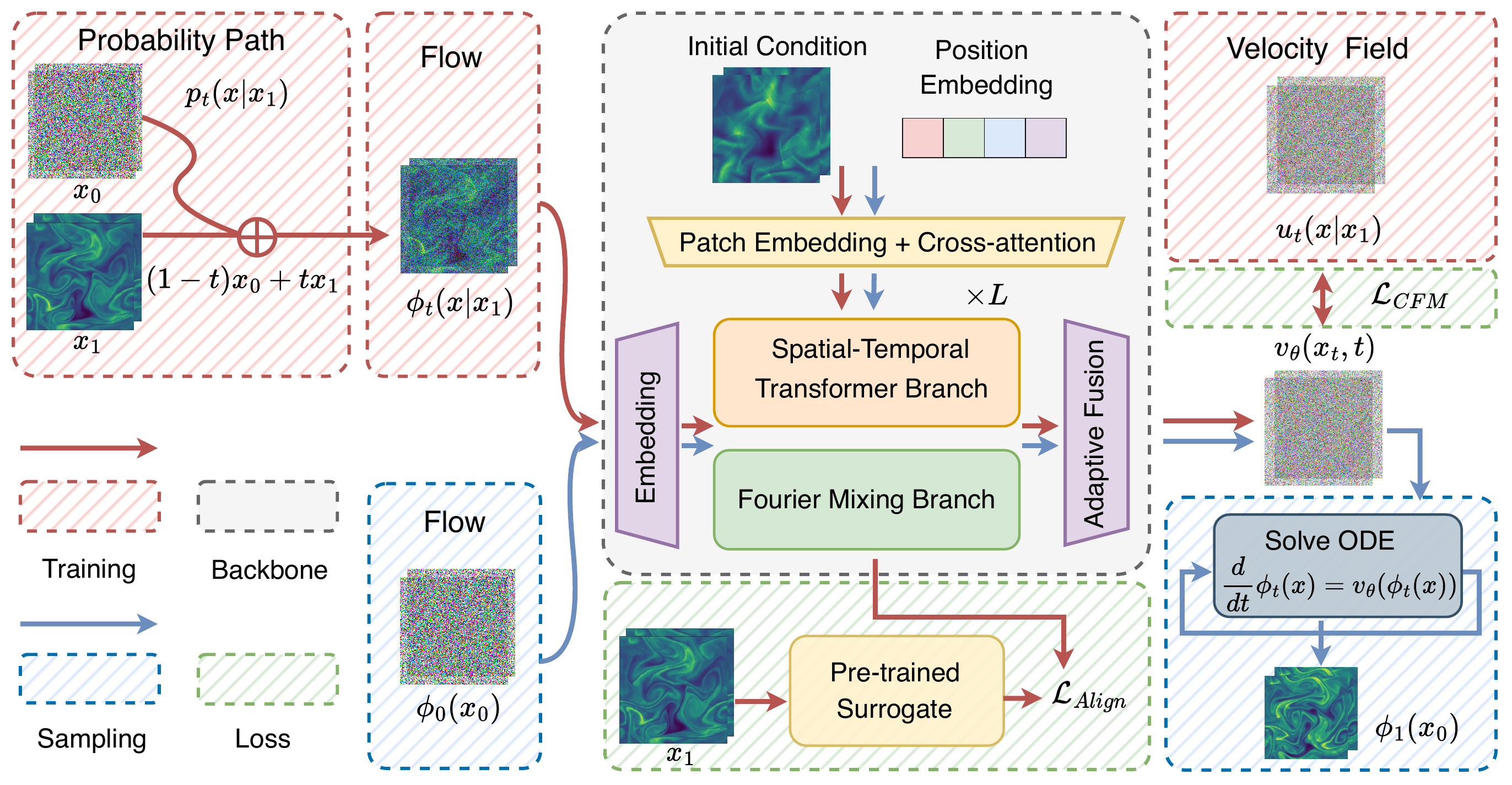}
  \vspace{-0.05in}
  \caption{Overview of our proposed \method{} framework. We mark the training and sampling process in red and blue backgrounds, respectively. 
  }   
  \label{fig:main}
\vspace{-0.1in}
\end{figure*}

\subsection{Motivation}
Our analysis reveals that even the most state-of-the-art generative models, whether in video generation or fluid simulation, suffer from spectral bias and common-mode noise. To address the challenges, we draw inspiration from differential attention~\cite{ye2024differential} and propose a novel local-global differential attention mechanism (\textit{i.e.,} SFA), which explicitly suppresses common-mode signals by emphasizing relative variations across spatial regions.

To mitigate spectral bias, we adopt both explicit and implicit strategies. The explicit approach is realized through the Fourier Mixing (FM) branch, which amplifies high-frequency feature extraction and adaptively fuses it with complementary representations. The implicit strategy leverages a pretrained surrogate model to guide the generative model via feature alignment. Specifically, we align intermediate representations with those from a frequency-sensitive pretrained Masked Autoencoder (MAE), encouraging the generator to preserve fine-scale structures.

\vspace{-1ex}
\subsection{Architecture Backbone}
The main dual-branch flow matching backbone is illustrated in Figure~\ref{fig:main}.

\noindent\textbf{Salient Flow Attention (SFA) Branch.}
Differential attention~\cite{ye2024differential} is motivated to amplify attention to the relevant context while canceling noise.
Its core principle lies in the use of noise-canceling headphones and differential amplifiers to reduce common-mode noise~\cite{laplante2018comprehensive}. 
Actually, the common-mode noise also exists in turbulence modeling in that highly nonlinear and high-frequency variations often lie in the relative variations between neighboring regions. For example, vorticity, a key descriptor of rotational flow, is defined as the spatial differential of the velocity field. 
Besides, velocity or pressure magnitudes can vary significantly across different regions, causing standard self-attention mechanisms~\cite{vaswani2017attention} to favor regions with larger absolute values while neglecting important relative variations.
These small-scale yet dynamically critical structures are frequently averaged out or diluted at the global level, becoming what we refer to as common-mode components. 

To address this limitation, our SFA branch is designed to amplify regions exhibiting strong relative variation in the spatio-temporal dimensions of the flow field. We start from the differential attention operator $\texttt{DiffAtten}(\cdot)$~\cite{ye2024differential} with feature $X$ and projection $Q_1, Q_2, K_1, K_2 \in \mathbb{R}^{N \times d_{\text{model}}}$:
\begin{equation}
\begin{aligned}
\label{eq:diffattn}
 [Q_1 ; Q_2] = X W^Q , &\quad [K_1 ; K_2] = X W^K ,\quad V = X W^V \\
 \texttt{Attn}_{i}(X) = \softmax(\frac{Q_i K^{T}_i}{\sqrt{d}}), &\quad
\texttt{DiffAttn}(X) = \left(  \texttt{Attn}_{1}(X)- \lambda~\texttt{Attn}_{2}(X) \right)V
\end{aligned}
\end{equation}
Furthermore, we aim for $\texttt{Attn}_{1}$ to focus on more localized structures, while $\texttt{Attn}_{2}$ captures the broader background context. Thus, the salient flow attention, computed as the difference between the two, effectively amplifies regions with salient and strong relative variations. 
We can interpret $\texttt{Attn}_{2}$ as a background common-mode pathway. Specifically,
\begin{equation}
\begin{aligned}
\mu_{\mathcal{N}(j)} = \frac{1}{|\mathcal{N}(j)|} \sum_{\kappa \in \mathcal{N}(j)} K_2[\kappa], \quad \tilde{K}_2[j] = K_2[j] - \mu_{\mathcal{N}(j)} \\
\tilde{\texttt{Attn}}_2[i,j] = 
\begin{cases}
\mathrm{softmax}\left( \frac{Q_2[i] \tilde{K}_2[j]^\top}{\sqrt{d}} \right), & j \in \mathcal{N}(i) \\
0, & \text{otherwise}
\end{cases}
\end{aligned}
\end{equation}
\begin{equation}
    \texttt{SF-Attn}(X) = \left(  \texttt{Attn}_{1}(X)- \lambda~\tilde{\texttt{Attn}}_{2}(X) \right)V
\end{equation}
$\mathcal{N}(j)$ denotes the set of $\kappa$ nearest neighbors (\textit{e.g.,} 5 as default) of patch $j$ among all $N$ patches. By suppressing attention patterns that are similar across all positions, the differential structure suppresses attention patterns that are similar across positions, thereby reducing common-mode noise and mitigating background interference. Moreover, differential mechanisms are inherently sensitive to proportional relationships rather than absolute values, which closely aligns with core physical phenomena in fluid dynamics, such as local perturbations and energy fluxes.

\noindent\textbf{Fourier Mixing (FM) Branch.} To explicitly control the frequency components, we utilize AFNO~\cite{guibas2021adaptive} as the backbone, and add a frequency-aware weighting coefficient as:
\begin{equation}
\mathcal{K} \cdot u^l(t,x) = \mathcal{F}^{-1} \left( \mathbf{W}_{\theta}^l(\boldsymbol{\xi}) \cdot \mathcal{F}[u^l](\boldsymbol{\xi}) \right)
\label{eq:integral_operator}
\end{equation}
where $\mathcal{F}$ and $\mathcal{F}^{-1}$ denote the Fourier and inverse Fourier transforms, respectively. $\boldsymbol{\xi}$ is the frequency component index. $\mathcal{F}[u^l](\boldsymbol{\xi})$ is the Fourier transform of $u^l(t,x)$, evaluated at frequency $\boldsymbol{\xi}$. $\mathbf{W}_{\theta}^l(\boldsymbol{\xi})$ is a complex-valued, learnable tensor that acts as a filter in the spectral domain.
Since there is mode truncation to keep high-frequency components, $\mathbf{W}_{\theta}^l(\boldsymbol{\xi})$ can amplify or attenuate specific frequency components. We aim for $\mathbf{W}_{\theta}^l(\boldsymbol{\xi})$ to further enhance the representation of high-frequency features: 
\begin{equation}
\mathbf{W}_{\theta}^l(\boldsymbol{\xi}) = \left( \beta_{\theta}^l + \alpha_{\theta}^l \cdot \| \boldsymbol{\xi} \|^{\eta} \right) \cdot \mathbf{W}^l_{\theta}
\label{eq:mode_weighting}
\end{equation}
where $\alpha$ and $\beta$ are the learnable scaling and shifting parameters, and $\mathbf{W}^l_{\theta}$ is the learnable weight at layer $l$ that control the strength of the weighting.
$\eta$ (initialized as 1) is a parameter that controls how the weight scales with the frequency magnitude $\| \boldsymbol{\xi} \|$; higher frequencies are assigned greater weights.

\noindent\textbf{Frequency-aware Adaptive Fusion.}
Let \( u_{\text{SFA}}, u_{\text{FM}} \in \mathbb{R}^{B \times H \times W \times D} \)  represent the feature maps produced by the two branches. These are concatenated along the channel axis and passed through a 1D convolutional layer followed by a sigmoid activation to generate a gating map \( \mathbf{G} \in \mathbb{R}^{B \times H \times W \times 1} \):
\begin{equation}
\mathbf{G} = \sigma\left( \text{Conv}_{1 \times 1}\left( \left[ u_{\text{SFA}}, u_{\text{FM}} \right] \right) \right),
\label{eq:gating_map}
\end{equation}
where \( \left[ \cdot, \cdot \right] \) denotes channel-wise concatenation.
\( \text{Conv}_{1 \times 1} \) is a convolutional layer that performs dimensionality reduction to a single-channel output, and
\( \sigma\) applies the sigmoid function element-wise. 

The fused feature map \( u_{\text{fused}} \) is the element-wise weighted sum of the two feature maps:
\begin{equation}
u_{\text{fused}} = \mathbf{G} \odot u_{\text{SFA}} + \left( 1 - \mathbf{G} \right) \odot u_{\text{FM}},
\label{eq:fused_features}
\end{equation}
where $\odot$ indicates element-wise multiplication, the gating map $\mathbf{G}$ is broadcast along the channel dimension to align with the shape of the input feature maps.

In particular, the frequency-aware branch $u_{\text{FM}}$ provides enhanced sensitivity to high-frequency components, which are crucial for capturing fine-scale turbulent structures such as vortices and shear layers. However, such features may be spatially sparse and vary in intensity across the flow field. By incorporating a data-driven gating map $\mathbf{G}$, the model dynamically balances contributions from the spatially attentive branch $u_{\text{SFA}}$ and the frequency-aware branch $u_{\text{FM}}$, ensuring that the fused representation remains both spectrally rich and spatially coherent.

\vspace{-1ex}
\subsection{Frequency-aware surrogate alignment}
\vspace{-1ex}

In this section, we propose a novel regularization approach for generative models from the perspective of feature alignment. Inspired by the idea of external representation supervision introduced in REPA~\cite{yu2024representation}, we introduce an external pretrained encoder to guide the learning of the internal representations within the generative model. The motivation behind this strategy stems from recent findings in representation learning. Specifically, Park et al.~\cite{park2023self} observe that different pretraining paradigms emphasize different spectral components of the data: masked modeling approaches such as MAE~\cite{shu2022masked} tend to capture high-frequency features, while contrastive learning frameworks like DINO~\cite{oquab2023dinov2} are biased toward low-frequency structures. Since fluid dynamics data often exhibit rich high-frequency patterns, such as vortices, shear layers, and turbulent fluctuations, we choose to employ MAE as the foundation for constructing our external representation space.

To this end, we perform MAE pretraining on fluid simulation data. The training process begins by randomly masking a large portion (typically 75\%) of spatial patches from each input sample. This encourages the model to infer the missing regions from surrounding contextual cues, forcing the encoder to learn semantically meaningful and spatially coherent features. We use a Video Vision Transformer (ViViT)~\cite{arnab2021vivit} as the encoder backbone, which operates only on the visible patches. The encoder’s output is then passed to a shallow transformer decoder, which reconstructs the full input field by predicting the values of the masked regions.
The model is trained to minimize the MSE between the predicted and ground truth over the masked regions. Once pretrained, the MAE encoder is frozen and used as a surrogate feature extractor to guide the training of the generative model. Specifically, we enforce alignment between the intermediate representations of \method{} and those of the MAE encoder at selected feature layers. The total training objective of the generative model thus combines the primary flow matching loss and the alignment loss $\mathcal{L}_{\text{Total}} = \mathcal{L}_{\text{CFM}} + \gamma \cdot \mathcal{L}_{\text{Align}}$.

\vspace{-1ex}
\section{Theoretical Analysis}
\vspace{-1ex}

To understand the fundamental limitations of diffusion models in learning turbulent dynamics, we formally analyze how frequency components evolve under the forward and backward diffusion processes. 
We show that generative models inherently exhibit a spectral bias, where high-frequency signals are corrupted earlier than low-frequency ones during the forward process. This bias results in generative models that preferentially reconstruct low-frequency structures while struggling to recover fine-scale turbulent features.

Let $\mathbf{x}_t$ follow the stochastic differential equation defined by the forward diffusion process:
$d\mathbf{x}_t = g(t) \, d\mathbf{w}_t$,
and let $\hat{\mathbf{x}}_0(\omega)$ denote the Fourier transform of the initial signal at frequency $\omega$. And let $\hat{\varepsilon}_t(\omega)$ denote the Fourier transform of the accumulated noise in the forward diffusion process up to time $t$, we have: 
\begin{theorem}[Spectral Bias in Generative Models]
\label{thm:spectral_bias}
 Assume that the power spectral density of the initial signal follows a power-law decay, i.e., $|\hat{\mathbf{x}}_0(\omega)|^2 \propto |\omega|^{-\alpha}$ for some $\alpha > 0$. Then the time $t_\gamma(\omega)$ at which the signal-to-noise ratio (SNR) at frequency $\omega$ drops below a threshold $\gamma$ satisfies:
$t_\gamma(\omega) \propto |\omega|^{-\alpha}$.
In other words, higher-frequency components reach the noise-dominated regime earlier in the diffusion process.
\end{theorem}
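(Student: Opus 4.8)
The plan is to exploit the fact that the forward SDE $d\mathbf{x}_t = g(t)\,d\mathbf{w}_t$ has no drift, so it integrates explicitly to $\mathbf{x}_t = \mathbf{x}_0 + \varepsilon_t$, where $\varepsilon_t = \int_0^t g(s)\,d\mathbf{w}_s$ is, conditionally on $\mathbf{x}_0$, a zero-mean Gaussian with per-component variance $\sigma^2(t) = \int_0^t g^2(s)\,ds$. Since the spatial Fourier transform is linear, this gives $\hat{\mathbf{x}}_t(\omega) = \hat{\mathbf{x}}_0(\omega) + \hat{\varepsilon}_t(\omega)$, so the whole problem reduces to comparing, frequency by frequency, the deterministic signal power $|\hat{\mathbf{x}}_0(\omega)|^2$ against the expected noise power $\mathbb{E}\,|\hat{\varepsilon}_t(\omega)|^2$. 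I would fix up front the convention that $\mathrm{SNR}(\omega,t) := |\hat{\mathbf{x}}_0(\omega)|^2 / \mathbb{E}\,|\hat{\varepsilon}_t(\omega)|^2$, so that the only randomness being averaged is the injected noise.

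The key structural input is that the injected noise is \emph{white in space}: the increments $d\mathbf{w}_t$ are uncorrelated across spatial locations. I would isolate this as a one-line lemma: working on the periodic domain $\mathcal{D}$ (so the Fourier modes are genuinely well defined) and expanding $\hat{\varepsilon}_t(\omega) = \sum_x \varepsilon_t(x)\, e^{-i\omega\cdot x}$ with $\mathbb{E}[\varepsilon_t(x)\varepsilon_t(x')] = \sigma^2(t)\,\delta_{xx'}$, one gets a flat expected power spectrum $\mathbb{E}\,|\hat{\varepsilon}_t(\omega)|^2 = c\,\sigma^2(t)$, with $c$ depending only on $|\mathcal{D}|$ and the transform normalization, and crucially independent of $\omega$. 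This is the quantitative counterpart of Parseval/Wiener--Khinchin for white noise.

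Combining the two steps, and using the power-law assumption $|\hat{\mathbf{x}}_0(\omega)|^2 \propto |\omega|^{-\alpha}$, the frequency-resolved SNR is $\mathrm{SNR}(\omega,t) = |\hat{\mathbf{x}}_0(\omega)|^2 / (c\,\sigma^2(t)) \propto |\omega|^{-\alpha}/\sigma^2(t)$. Setting $\mathrm{SNR}(\omega, t_\gamma(\omega)) = \gamma$ and solving yields $\sigma^2(t_\gamma(\omega)) \propto |\omega|^{-\alpha}$; specializing to a constant diffusion coefficient, i.e. the standard Brownian forward process with $\sigma^2(t) = g^2 t$, gives $t_\gamma(\omega) \propto |\omega|^{-\alpha}$, which is the claim. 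For general $g$, the same identity together with the monotonicity of $t \mapsto \sigma^2(t)$ already shows $t_\gamma$ is strictly decreasing in $|\omega|$, i.e. higher-frequency components cross into the noise-dominated regime strictly earlier, which is the qualitative conclusion the theorem records.

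The main obstacle I anticipate is rigor rather than ideas: making the "white in space" statement precise. A Wiener process valued in $L^2(\mathcal{D})$ with spatially white increments is not function-valued (its spatial covariance is a delta), so one must either pass to the discretized field — finitely many pixels or patches, as is actually the case in the implementation — or interpret $\hat{\varepsilon}_t(\omega)$ as the pairing of a generalized Gaussian process against the Fourier character, and then verify the flat-spectrum identity with the correct constants in that setting. A secondary, minor point is to state explicitly whether $|\hat{\mathbf{x}}_0(\omega)|^2$ denotes a single realization's spectrum or its expectation over $p_{\text{data}}$; the power-law hypothesis is most naturally read as a statement about the expected (ensemble) spectrum, and I would phrase the SNR accordingly so the proportionality constants are unambiguous.
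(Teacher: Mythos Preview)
Your approach is essentially the same as the paper's: decompose $\hat{\mathbf{x}}_t(\omega)=\hat{\mathbf{x}}_0(\omega)+\hat{\varepsilon}_t(\omega)$, use spatial whiteness of the Wiener noise (via It\^o isometry and Parseval) to get a flat expected noise spectrum $\mathbb{E}|\hat{\varepsilon}_t(\omega)|^2=\int_0^t|g(s)|^2\,ds$, form the SNR ratio, set it equal to $\gamma$, and invert using monotonicity of $t\mapsto\sigma^2(t)$. Your observation that the \emph{exact} proportionality $t_\gamma(\omega)\propto|\omega|^{-\alpha}$ requires $\sigma^2(t)\propto t$ (e.g.\ constant $g$), while general $g$ only yields the qualitative ``higher frequencies cross earlier'' monotonicity, is in fact sharper than the paper's own argument, which inverts the monotone map and asserts proportionality without isolating this hypothesis.
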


This result follows directly from the following lemmas.

\begin{lemma}[Spectral Variance of Diffusion Noise]
\label{lemma:noise_variance}
For all frequencies $\omega$, the variance of $\hat{\varepsilon}_t(\omega)$ is given by:
$\mathbb{E}[|\hat{\varepsilon}_t(\omega)|^2] = \int_0^t |g(s)|^2 ds$,
which is constant across $\omega$.
\end{lemma}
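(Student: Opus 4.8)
The plan is to solve the linear SDE explicitly, commute the Fourier transform with the resulting stochastic integral, and then read off the second moment from the It\^o isometry together with the unitarity of the Fourier transform.

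\textbf{Step 1.} Since the drift vanishes, integrating $d\mathbf{x}_t = g(t)\,d\mathbf{w}_t$ gives $\mathbf{x}_t = \mathbf{x}_0 + \varepsilon_t$, where the accumulated noise is the mean-zero Gaussian process $\varepsilon_t = \int_0^t g(s)\,d\mathbf{w}_s$ and $\mathbf{w}_s$ is the driving Wiener process, taken to be white in space (identity spatial covariance) so that the notion of a spectral variance is well defined. Working on a spatial grid, where $\mathcal{F}$ is the unitary DFT matrix $F$, linearity of the It\^o integral immediately gives $\hat{\varepsilon}_t(\omega) = \bigl(F\varepsilon_t\bigr)(\omega) = \int_0^t g(s)\, d\bigl(F\mathbf{w}_s\bigr)(\omega)$; in the continuous setting the same identity holds after a stochastic Fubini argument, legitimate because $g\in L^2_{\mathrm{loc}}$ and $\mathcal{F}$ is bounded on the relevant space.

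\textbf{Step 2.} Because $F$ is unitary and $\mathbf{w}$ has identity covariance, $F\mathbf{w}_s$ is again a vector of independent standard Brownian motions: $\mathrm{Cov}\bigl(d(F\mathbf{w}_s)\bigr) = FF^{*}\,ds = \mathbf{I}\,ds$, so the quadratic-variation density of each component $d(F\mathbf{w}_s)(\omega)$ equals $1$ for every $\omega$ --- this is the Plancherel step, namely that the Fourier image of white noise is again white. Since $g$ is deterministic, the It\^o isometry then yields $\mathbb{E}\bigl[\,|\hat{\varepsilon}_t(\omega)|^2\,\bigr] = \int_0^t |g(s)|^2\, ds$, which is independent of $\omega$, as claimed. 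Together with the power-law signal spectrum $|\hat{\mathbf{x}}_0(\omega)|^2 \propto |\omega|^{-\alpha}$, this is exactly the ingredient needed to locate the SNR threshold-crossing time $t_\gamma(\omega)$ in Theorem~\ref{thm:spectral_bias}.

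\textbf{Main obstacle.} The only genuine subtlety is conceptual rather than computational: one must be explicit that the driving noise is space--time white, since a spatially correlated $\mathbf{w}$ would make the spectral variance inherit that correlation's spectral density instead of being flat; and, in the continuous-space formulation, one must justify interchanging the Fourier transform with the stochastic integral via stochastic Fubini. On a finite grid both issues evaporate, since the DFT is merely a unitary change of coordinates and the claim collapses to the coordinate-wise It\^o isometry.
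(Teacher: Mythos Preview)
Your proposal is correct and follows essentially the same approach as the paper: both solve the driftless SDE to isolate $\varepsilon_t=\int_0^t g(s)\,d\mathbf{w}_s$, then combine the It\^o isometry with the unitarity of the Fourier transform (Plancherel) to conclude that the spectral variance is $\int_0^t |g(s)|^2\,ds$, independent of $\omega$. If anything, your version is slightly more direct---you commute $\mathcal{F}$ with the stochastic integral and apply the isometry frequency-by-frequency, whereas the paper first computes the aggregate $L^2$ norm and then invokes flatness of the white-noise spectrum---and you are more explicit about the spatial-whiteness assumption and the stochastic Fubini step that the paper leaves implicit.
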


\begin{lemma}[Frequency-dependent Signal-to-Noise Ratio]
\label{lemma:snr}
Assume the initial signal $\mathbf{x}_0$ has a Fourier spectrum $\hat{\mathbf{x}}_0(\omega)$, and the diffusion follows $d\mathbf{x}_t = g(t) d\mathbf{w}_t$. Then the signal-to-noise ratio (SNR) at frequency $\omega$ is: $\text{SNR}(\omega) = \frac{|\hat{\mathbf{x}}_0(\omega)|^2}{\int_0^t |g(s)|^2 ds}$.
\end{lemma}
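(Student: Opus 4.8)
The plan is to treat this as an immediate consequence of the linear (driftless) structure of the forward process together with Lemma~\ref{lemma:noise_variance}. First I would solve the SDE $d\mathbf{x}_t = g(t)\,d\mathbf{w}_t$ explicitly: since there is no drift term, integrating from $0$ to $t$ gives $\mathbf{x}_t = \mathbf{x}_0 + \int_0^t g(s)\,d\mathbf{w}_s$, where the second summand is precisely the accumulated diffusion noise $\varepsilon_t$ whose Fourier transform was denoted $\hat{\varepsilon}_t(\omega)$ in the setup. Because the spatial Fourier transform is a deterministic bounded linear operator, it commutes with the stochastic integral, so applying it to both sides yields the clean frequency-wise decomposition $\hat{\mathbf{x}}_t(\omega) = \hat{\mathbf{x}}_0(\omega) + \hat{\varepsilon}_t(\omega)$, valid for every frequency $\omega$.

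Next I would identify the two terms of this decomposition with the ``signal'' and ``noise'' contributions to the spectral content at frequency $\omega$. The signal term $\hat{\mathbf{x}}_0(\omega)$ is deterministic (it is the given initial datum), so its power is simply $|\hat{\mathbf{x}}_0(\omega)|^2$; the noise term $\hat{\varepsilon}_t(\omega)$ is mean-zero, so its power equals its variance $\mathbb{E}[|\hat{\varepsilon}_t(\omega)|^2]$. Defining $\text{SNR}(\omega)$ as the ratio of signal power to expected noise power and substituting the value $\mathbb{E}[|\hat{\varepsilon}_t(\omega)|^2] = \int_0^t |g(s)|^2\,ds$ supplied by Lemma~\ref{lemma:noise_variance} gives $\text{SNR}(\omega) = |\hat{\mathbf{x}}_0(\omega)|^2 \big/ \int_0^t |g(s)|^2\,ds$, which is exactly the claim.

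The only genuinely delicate point, and the one I expect to require the most care, is justifying the interchange of the Fourier transform with the It\^o integral so that $\hat{\varepsilon}_t(\omega)$ is meaningful. I would handle this either by interpreting $\mathbf{w}_t$ as a cylindrical Wiener process / space-time white noise (so that $\mathcal{F}\mathbf{w}_t$ is again a well-defined Gaussian process with independent spectral increments), or, more concretely, by passing to a discretized spatial grid where the statement reduces to finite-dimensional linear algebra and the commutation is trivial. In the continuum case, the interchange follows from approximating $g$ by simple (piecewise-constant) integrands, for which $\mathcal{F}\big[\int_0^t g(s)\,d\mathbf{w}_s\big] = \int_0^t g(s)\,d(\mathcal{F}\mathbf{w}_s)$ holds termwise, and then taking $L^2(\Omega)$ limits using the It\^o isometry together with boundedness of $\mathcal{F}$. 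Everything else — linearity of $\mathcal{F}$, the mean-zero property of the stochastic integral, and the definition of SNR as a power ratio — is routine, so once the commutation is pinned down the proof is short.
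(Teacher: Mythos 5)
Your proposal is correct and follows essentially the same route as the paper: decompose $\hat{\mathbf{x}}_t(\omega) = \hat{\mathbf{x}}_0(\omega) + \hat{\varepsilon}_t(\omega)$, take signal power to be $|\hat{\mathbf{x}}_0(\omega)|^2$ and noise power to be $\mathbb{E}[|\hat{\varepsilon}_t(\omega)|^2]$ (supplied by Lemma~\ref{lemma:noise_variance}), and read off the ratio. The only cosmetic difference is that the paper expands the full power spectral density $\mathbb{E}[|\hat{\mathbf{x}}_t(\omega)|^2]$ and explicitly kills the cross term using zero mean before reading off the two powers, whereas you identify the two terms directly; your extra remarks on commuting $\mathcal{F}$ with the It\^o integral are more careful than the paper, which asserts this implicitly.
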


\begin{lemma}[Time to Reach SNR Threshold]
\label{lemma:snr_threshold}
Fix a signal-to-noise ratio threshold $\gamma > 0$. Then, the time $t_\gamma(\omega)$ at which the SNR of frequency $\omega$ drops below $\gamma$ is given by:
$\int_0^{t_\gamma(\omega)} |g(s)|^2 ds = \frac{|\hat{\mathbf{x}}_0(\omega)|^2}{\gamma}$.
\end{lemma}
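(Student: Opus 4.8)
The plan is to obtain the identity directly from Lemma~\ref{lemma:snr} by an elementary inversion, the only substantive point being to check that the threshold-crossing time $t_\gamma(\omega)$ is well defined. First I would introduce the accumulated noise energy $\Sigma(t) := \int_0^t |g(s)|^2\,ds$ and record its basic properties: under the mild assumption that $|g|^2$ is locally integrable and not identically zero, $\Sigma$ is continuous, nondecreasing, satisfies $\Sigma(0)=0$, and grows past any fixed positive level as $t$ increases. By Lemma~\ref{lemma:snr}, the frequency-$\omega$ signal-to-noise ratio at time $t$ is $\text{SNR}_t(\omega) = |\hat{\mathbf{x}}_0(\omega)|^2/\Sigma(t)$, which as a function of $t$ is therefore continuous and nonincreasing, with $\text{SNR}_t(\omega)\to\infty$ as $t\to 0^+$.

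Next I would define $t_\gamma(\omega)$ precisely as the infimum of times $t$ for which $\text{SNR}_t(\omega)\le\gamma$, equivalently for which $\Sigma(t)\ge |\hat{\mathbf{x}}_0(\omega)|^2/\gamma$. Since $\text{SNR}_t(\omega)$ starts strictly above $\gamma$ and eventually falls below it, this infimum is finite, and by continuity of $\Sigma$ it is attained with equality, so that $\Sigma(t_\gamma(\omega)) = |\hat{\mathbf{x}}_0(\omega)|^2/\gamma$, i.e.
\[
\int_0^{t_\gamma(\omega)} |g(s)|^2\,ds = \frac{|\hat{\mathbf{x}}_0(\omega)|^2}{\gamma},
\]
which is exactly the claim. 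Equivalently, one simply sets $\text{SNR}_{t_\gamma(\omega)}(\omega)=\gamma$ in Lemma~\ref{lemma:snr} and multiplies both sides by $\Sigma(t_\gamma(\omega))$.

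The statement is essentially a one-line algebraic rearrangement of Lemma~\ref{lemma:snr}, so there is no real obstacle; the only care needed is the existence and (effective) uniqueness of the crossing time, which is handled by the monotonicity and continuity of $\Sigma$. If $\Sigma$ has flat stretches the crossing time remains unambiguous as the first time the level $|\hat{\mathbf{x}}_0(\omega)|^2/\gamma$ is reached, and the value of $\Sigma$ there is the same regardless, so the identity is insensitive to this choice. I would close by noting that the resulting formula exhibits $t_\gamma(\omega)$ as an increasing function of $|\hat{\mathbf{x}}_0(\omega)|^2$ (through the inverse of $\Sigma$), which, combined with Lemma~\ref{lemma:noise_variance} and the power-law assumption, is precisely the monotonicity exploited in Theorem~\ref{thm:spectral_bias}.
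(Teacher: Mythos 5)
Your proof is correct and follows essentially the same route as the paper: rearrange the SNR formula from Lemma~\ref{lemma:snr} into $\int_0^{t}|g(s)|^2\,ds \ge |\hat{\mathbf{x}}_0(\omega)|^2/\gamma$ and invoke continuity and monotonicity of the accumulated noise integral to identify the crossing time. You are somewhat more careful than the paper (which assumes strict monotonicity) in allowing $\Sigma$ to have flat stretches and defining $t_\gamma(\omega)$ as the first-hitting time, but this is a minor refinement of the same argument rather than a different approach.
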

Together, Lemmas~\ref{lemma:noise_variance}–\ref{lemma:snr_threshold} demonstrate that due to the power-law decay of natural signals in the frequency domain, higher-frequency components are corrupted earlier than low-frequency components during forward diffusion. As a consequence, the generative model learns to reconstruct low-frequency information first and may fail to recover critical high-frequency features, such as vortices and shear layers, that are essential for accurate turbulence modeling. Proof is provided in Appendix~\ref{sec:supp_theorem}.

\vspace{-2ex}
\section{Experiments}
\vspace{-1ex}

In this section, we aim to answer the following questions with solid experimental validation:
\textbf{\texttt{Q1:}} Which model is the best for multi-step complex fluid dynamics simulation?
\textbf{\texttt{Q2:}} Can the adaptive guidance of frequency-aware features encourage the generative model to overcome the spectral bias?
\textbf{\texttt{Q3:}} Is the salient flow attention mechanism of \method{} prone to reduce common-mode noise when simulating complex fluid dynamics?
\textbf{\texttt{Q4:}} Can the generative flow model obtain better guidance from the pre-trained surrogate model?
\textbf{\texttt{Q5:}} Does \method{} exhibit better generalization ability on out-of-distribution conditions?
\textbf{\texttt{Q6:}} Does \method{} exhibit better generalization ability on longer time steps?
\textbf{\texttt{Q7:}} Does \method{} exhibit better generalization ability on noisy input?

We address \texttt{Q1} in Section~\ref{sec:main_results} on \texttt{compressible N-S}~\cite{takamoto2022pdebench} and \texttt{shear flow}~\cite{ohana2024well} datasets, both of which involve highly nonlinear, multi-scale dynamics with sharp gradients and complex spatiotemporal structures, posing significant challenges for accurate turbulence modeling and generalization. Please refer to Appendix \ref{sec:data_details} for more dataset details. The answers to \texttt{Q2}, \texttt{Q3}, and \texttt{Q4} are provided in Section~\ref{sec:ablation}, along with a comprehensive ablation study. \texttt{Q5} and \texttt{Q6} are discussed in Section~\ref{sec:generation} with abundant extension experiments, and \texttt{Q7} is addressed in Section~\ref{sec:robust}.

\vspace{-1ex}
\subsection{Experimental Setup}
\vspace{-1ex}

\noindent\textbf{Evaluation Metrics.}
To ensure a comprehensive and objective evaluation, we employ a diverse set of quantitative metrics. First, we assess global reconstruction accuracy using the Mean Squared Error (MSE) and its normalized counterpart (nRMSE). In addition, we report the Maximum Absolute Error (MAX\_ERR) to capture the worst-case deviation across the domain.

\noindent\textbf{Baselines.}
We select state-of-the-art methods in video generation and neural PDE solvers as baselines for comparison.
For autoregressive surrogate models, we consider \textbf{(1)} 2D Fourier Neural Operator (FNO)~\cite{li2020fourier}, \textbf{(2)} Factorized FNO (FFNO)~\cite{tran2021factorized}, \textbf{(3)} Operator Transformer (OFormer)~\cite{li2022transformer}, \textbf{(4)} Denoising Operator Transformer (DPOT)~\cite{hao2024dpot}. For multi-step
surrogate models, we consider \textbf{(1)} Video Vision Transformer (ViViT)~\cite{arnab2021vivit}, \textbf{(2)} 3D FNO~\cite{li2020fourier}, \textbf{(3)} FourierFlow with surrogate training (Ours Surrogate). The models that use next-step generative prediction followed by rollout for multi-step forecasting include: \textbf{(1)} Diffusion Transformers (DiT$^{*}$)~\cite{peebles2023scalable} (${*}$ means re-implementation), \textbf{(2)} Denoising Diffusion Implicit Models (DiT-DDIM$^{*}$)~\cite{song2020denoising}, \textbf{(3)} PDEDiff~\cite{shysheya2024conditional}, \textbf{(4)} Scalable Interpolant Transformers (SiT$^{*}$)~\cite{ma2024sit}.
For multi-step generative models, we consider \textbf{(1)} Conditional Flow Matching (CFM$^{*})$~\cite{lipman2022flow}, \textbf{(2)} DYffusion~\cite{ruhling2023dyffusion}, \textbf{(3)} Spatial-temporal Diffusion Transformer (STDiT$^{*}$)~\cite{opensora2}.
Details can be referenced in Appendix \ref{sec:supp_baseline}. 

\begin{table*}[t]
\caption{\textbf{Results on multi-step turbulent flow modeling.}  RMSE represents root mean square error, nRMSE is normalized RMSE, and Max\_Err computes the maximum error of local worst case. To ensure a fair comparison, all models start with the same initial step. The surrogate model generates multi-step outputs autoregressively, while the generative model produces them directly.}
\label{tab:main}
\centering
\begin{scriptsize}
\setlength{\tabcolsep}{1mm}{
\begin{tabular}{lc|ccc|ccc|ccc}
    \toprule
    \multicolumn{1}{c}{}&\multicolumn{1}{c|}{}&\multicolumn{3}{c|}{\textbf{Compressible N-S} (M=0.1)}&\multicolumn{3}{c|}{\textbf{Compressible N-S} (M=1.0)}&\multicolumn{3}{c}{\textbf{Shear Flow}}\\
    \multirow{2}{*}{Model} & \multirow{2}{*}{Parameter} & \multirow{2}{*}{MSE$\downarrow$} & \multirow{2}{*}{nRMSE$\downarrow$} & \multirow{2}{*}{Max\_ERR$\downarrow$} & \multirow{2}{*}{MSE$\downarrow$} & \multirow{2}{*}{nRMSE$\downarrow$} &
    \multirow{2}{*}{Max\_ERR$\downarrow$} &\multirow{2}{*}{MSE$\downarrow$} & \multirow{2}{*}{nRMSE$\downarrow$} & \multirow{2}{*}{Max\_ERR$\downarrow$} \\
    & & & & & & & & & &\\
    \midrule
    \rowcolor[RGB]{234, 238, 234} \multicolumn{11}{l}{\emph{Autoregressive Surrogate models}}\\
    2D FNO~\cite{li2020fourier} & 12.4M & 0.1542 & 0.2965 & 3.1079 & 0.2816 & 0.4162 & 4.2707 & 0.7267 & 0.8956 & 6.6231  \\
    FFNO~\cite{tran2021factorized} & 15.8M & 0.1014 & 0.2671 & 3.2142  & 0.2537 & 0.4115 & 4.2765 & 0.7045 & 0.8829 &  6.3019  \\
    OFormer~\cite{li2022transformer} & 36.9M & 0.1349 & 0.2795 & 1.9742 & 0.2120 & 0.3954 & 4.0198 & 0.7022 & 0.8834 & 6.1064  \\
    DPOT$^{*}$~\cite{hao2024dpot} & 102M & 0.0628 & 0.2207 & 1.2017 & 0.1679 & 0.3276 & 3.3367 & 0.6842 & 0.7322 & 5.5641  \\\midrule
    \rowcolor[RGB]{234, 238, 234} \multicolumn{11}{l}{\emph{Multi-step Surrogate models}}\\
    ViViT~\cite{arnab2021vivit} & 88.9M & 0.0826 & 0.2765 & 2.3558 & 0.1620 & 0.3518 & 3.4271 & 0.6294 & 0.6434 & 5.2071  \\
    3D FNO~\cite{li2020fourier} & 36.2M & 0.0772 & 0.3078 & 4.4769 & 0.1972 & 0.3929 & 3.6953 & 0.6991 & 0.7595 & 5.6932  \\
    Ours-Surrogate & 161M & \underline{0.0519} & 0.2033 & 1.6028 & 0.1008 & 0.3102 & 3.6852 & 0.6802 & 0.7338 & 5.5248  \\
    \midrule
    \rowcolor[RGB]{234, 238, 234} \multicolumn{11}{l}{\emph{Next-step Generative models + Rollout}}\\
    DiT$^{*}$~\cite{peebles2023scalable} & 88.2M & 0.1024 & 0.2598 & 2.2013 &  0.1749 & 0.3629 & 3.7219 & 0.6732 & 0.7842 & 5.8311  \\
    DiT-DDIM$^{*}$~\cite{song2020denoising} & 88.2M & 0.0819 & 0.2278 & 1.7908 & 0.1533 & 0.3217 & \cellcolor{lightblue}\textbf{3.2506} & 0.6699 & 0.7901 & 5.8553  \\
    PDEDiff~\cite{shysheya2024conditional} & 53.2M & 0.0982 & 0.2452 & 1.9523 & 0.1198 & 0.3106 & 3.3087 & 0.6206 & 0.6894 & 5.2147  \\
    SiT$^{*}$~\cite{ma2024sit} & 88.2M & 0.1004 & 0.2562 & 1.9842 & 0.1544 & 0.3652 & 3.6571 & 0.6438 & 0.7710 & 5.5826  \\
    \midrule
    \rowcolor[RGB]{234, 238, 234} \multicolumn{11}{l}{\emph{Multi-step Generative models}}\\
    CFM$^{*}$~\cite{lipman2022flow} & 155M & 0.1217 & 0.3038 & 2.6365 & 0.1336 & 0.3554 & 3.3983 & 0.6224 & 0.6802 & \underline{5.1093}  \\ 
    DYffusion~\cite{ruhling2023dyffusion} & 109M & 0.0619 & 0.2098 & 1.2098 & 0.1965 & 0.3742 & 3.8762 & 0.6572 & 0.7402 & 5.2783  \\
    STDiT$^{*}$~\cite{opensora2} & 169M & 0.0642 & \underline{0.1955} & \underline{1.1352} & \underline{0.1125} & \underline{0.3041} & 3.2798 & \underline{0.5908} & \underline{0.6412} & 5.1772  \\
     \textbf{\method{} (ours)} & 161M & \cellcolor{lightblue}\textbf{0.0277} 
& \cellcolor{lightblue}\textbf{0.1530} 
& \cellcolor{lightblue}\textbf{0.9625} 
& \cellcolor{lightblue}\textbf{0.0955} 
& \cellcolor{lightblue}\textbf{0.2868} & \underline{3.2551} & \cellcolor{lightblue}\textbf{0.5811} & \cellcolor{lightblue}\textbf{0.6209} & \cellcolor{lightblue}\textbf{5.0992} \\
    \bottomrule
\end{tabular}}
\end{scriptsize}
\vspace{-0.5cm}
\end{table*}

\subsection{Main Results}

\label{sec:main_results}
To address \texttt{Q1}, we evaluate a range of state-of-the-art models on three representative turbulence scenarios, as shown in Table~\ref{tab:main}. The answer is, without a doubt, our FourierFlow. Besides, we observe several key findings from the evaluation. First, our proposed \method{} achieves state-of-the-art performance across all scenarios, outperforming the second-best method by approximately 20\% on average. This demonstrates its superior capability in modeling turbulent dynamics. Second, auto-regressive surrogate models generally underperform compared to other approaches. This may be attributed to the use of teacher forcing during training, which reduces the model’s robustness to distributional shifts during long-term rollout, an issue particularly pronounced in complex fluid systems. Third, multi-step surrogate models remain highly promising. Their performance is close to that of the best generative models, and due to their higher training efficiency, surrogate approaches continue to be the dominant choice in practice. This highlights a valuable direction for future research, improving generative modeling for turbulence while maintaining the efficiency and stability of surrogate architectures.
Lastly, we note that next-step generative models combined with rollout tend to perform worse than direct multi-step generation approaches. These methods also suffer from performance degradation due to error accumulation and distributional shift during inference, similar to what is observed in prediction tasks.

\vspace{-2ex}
\subsection{Ablation Study}
\vspace{-1ex}
\label{sec:ablation}

This section will provide important analysis of our inner components.

\begin{wrapfigure}{r}{0.45\textwidth}
\centering
\includegraphics[width=0.45\textwidth]{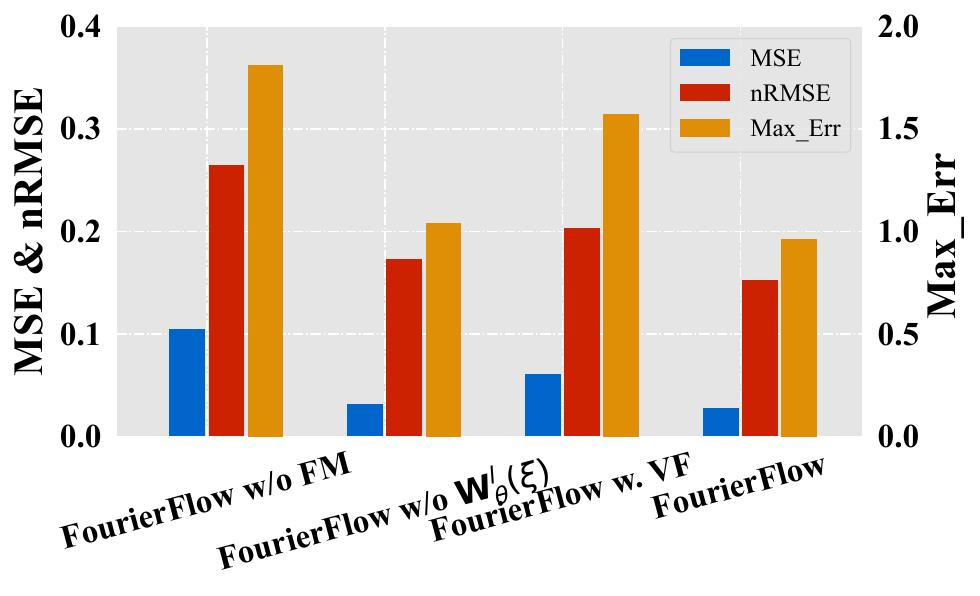}
\caption{Ablation study on frequency-aware learning in compressible N-S simulations.} 
\label{fig:ablation_fourier}
\vspace{-0.1cm}
\end{wrapfigure}

\noindent\textbf{Ablation on frequency-aware generation.} To answer \texttt{Q2}, we implement extension experiments on compressible N-S with three important variants: \method{} \textbf{w/o FM}, \method{} \textbf{w/o $\mathbf{W}_{\theta}^l(\boldsymbol{\xi})$} and \method{}\textbf{ w. VF}. \method{} \textbf{w/o FM} refers to the variant of our method where the Fourier Mixing branch is removed, along with the adaptive fusion mechanism. This configuration eliminates the integration of high-frequency information. As shown in Figure~\ref{fig:ablation_fourier}, this variant exhibits a significant performance drop, highlighting the importance of frequency-aware feature fusion. \method{} \textbf{w/o $\mathbf{W}_{\theta}^l(\boldsymbol{\xi})$} denotes the ablation variant where the learnable frequency-dependent weights used to modulate high-frequency components are removed. This comparison evaluates the importance of explicitly controlling the contribution of high-frequency features.
\method{} \textbf{w. VF} replaces our adaptive fusion mechanism with simple element-wise addition. It also leads to a notable performance degradation, underscoring the complexity of multi-frequency interactions in turbulence and the necessity of adaptive integration across different frequencies.

\begin{wrapfigure}{r}{0.45\textwidth}
\centering
\vspace{-0.3cm}
\includegraphics[width=0.45\textwidth]{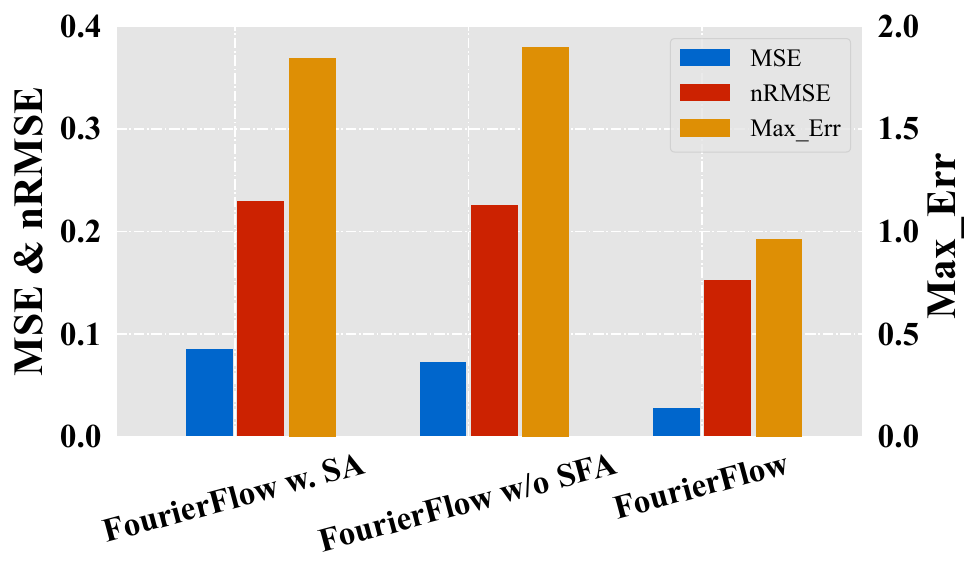}
\caption{Ablation Study on common-mode noise suppression in compressible N-S simulations.} 
\label{fig:ablation_attn}
\vspace{-0.3cm}
\end{wrapfigure}

\noindent\textbf{Ablation on common-mode noise reduction.} To answer \texttt{Q3}, we introduce two key ablation variants: \method{} \textbf{w. SA} and \method{} \textbf{w/o SFA}. All of the experiments are implemented in a compressible N-S simulation and follow the same setting as in main results. The \method{} \textbf{w. SA} variant replaces our proposed attention mechanism with a standard self-attention module, which is widely used in current generative models. As shown in Figure~\ref{fig:ablation_attn}, this modification leads to a significant performance drop, indicating that conventional attention scores may hinder the effectiveness of turbulence modeling. Further analysis of the attention distributions in Appendix~\ref{sec:more_results} reveals that \method{} is capable of effectively suppressing common-mode interference, a critical factor in capturing localized dynamic structures.
The \method{} \textbf{w/o SFA} variant removes the entire SFA branch from the model. This comparison highlights the importance of the attention-based pathway in driving progressive optimization within the generative model.

\begin{wrapfigure}{r}{0.4\textwidth}
\centering
\vspace{-0.3cm}
\includegraphics[width=0.4\textwidth]{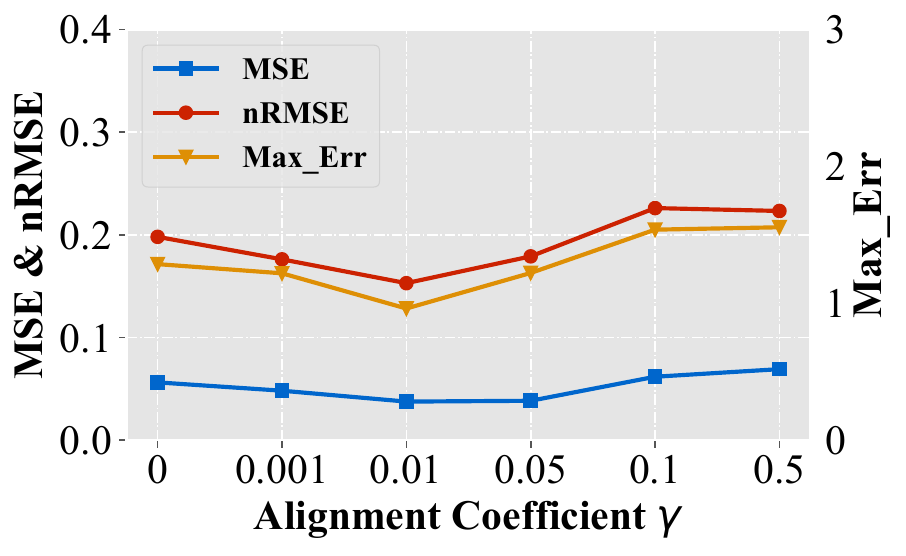}
\vspace{-0.4cm}
\caption{Ablation study on frequency-aware surrogate alignment in compressible N-S simulations. } 
\label{fig:ablation_align}
\vspace{-0.3cm}
\end{wrapfigure}

\noindent\textbf{Ablation on frequency-aware alignment.} To answer \texttt{Q4}, we conduct a systematic sensitivity analysis of the alignment loss coefficient. Specifically, we perform a grid search over the values \{0, 0.001, 0.01, 0.05, 0.1, 0.5\}. All experiments are conducted on compressible N-S simulations using the same settings as in the main results. As shown in Figure~\ref{fig:ablation_align}, the model achieves its best performance when the alignment coefficient is set to 0.01. Notably, deviations in either direction lead to a significant drop in performance. In particular, when the coefficient is set to 0 (\textit{i.e.,} no alignment) or 0.5 (\textit{i.e.,} overly emphasizing alignment), the model suffers a performance degradation of more than 20\%. This ablation study not only confirms the effectiveness of alignment-based regularization, but also shows that the model is relatively robust within the range of 0.01 to 0.05.

\vspace{-1ex}
\subsection{Generalization Analysis}
\vspace{-1ex}
\label{sec:generation}

To assess the generalization and practical applicability of \method{}, we conduct generalization tests under unseen conditions and longer time steps.

\begin{wrapfigure}{r}{0.5\textwidth}
\centering
\vspace{-0.3cm}
\includegraphics[width=0.5\textwidth]{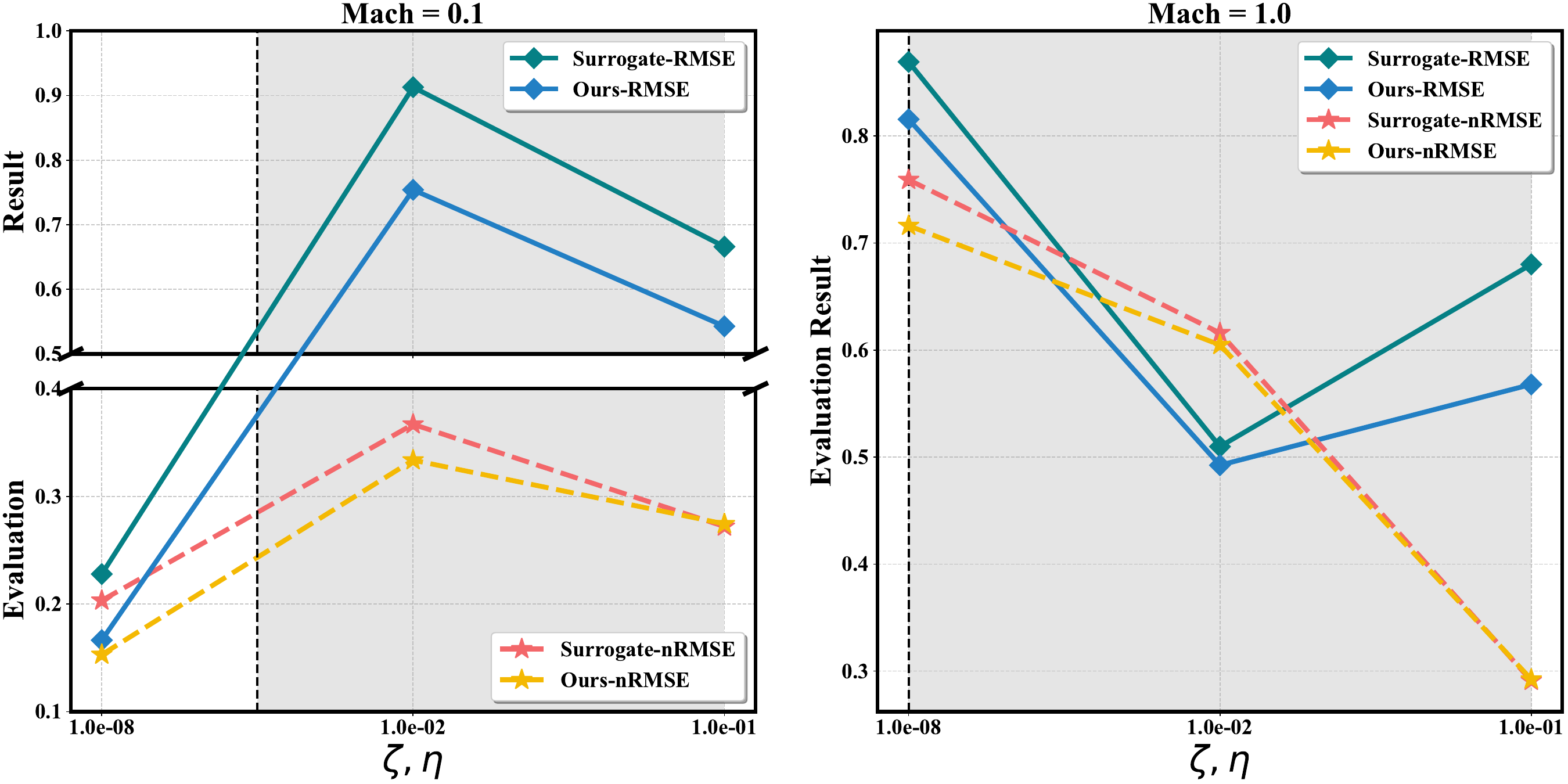}
\caption{Generalization analysis on compressible N-S equation.  The x-axis represents different values of shear viscosity and bulk viscosity respectively, while the y-axis shows the evaluation result. The gray background indicates that the initial condition distributions of the corresponding data are outside the distribution of the training data.} 
\label{fig:general}
\vspace{-0.3cm}
\end{wrapfigure}

\noindent\textbf{Generalization analysis on out-of-distribution initial conditions.} To answer \texttt{Q5}, we evaluate zero-shot generalization performance on five compressible N-S datasets, and separately test the two cases with Mach numbers of 0.1 and 1. As shown in Figure~\ref{fig:general}, while all models experience some degree of performance degradation under these shifted initial conditions, our proposed \method{} consistently demonstrates superior generalization capability compared to the SOTA surrogate baseline.
These results indicate that \method{} is not only capable of modeling complex turbulent dynamics, but also robust to changes in real-world physical configurations.

\noindent\textbf{Generalization analysis on long-horizon rollouts.} We perform long-term rollouts using the multi-step generative model trained on the compressible N-S dataset. The surrogate model predicts four steps at a time and performs rollout using its own previous predictions, whereas our method samples four steps based on a four-step initial condition and then performs rollout using the sampled trajectories. As shown in Figure \ref{fig:general_time}, it is evident that our \method{} generalizes more stably over longer time horizons.

\begin{figure*}[t]
  \centering
  \includegraphics[width=0.9\linewidth]{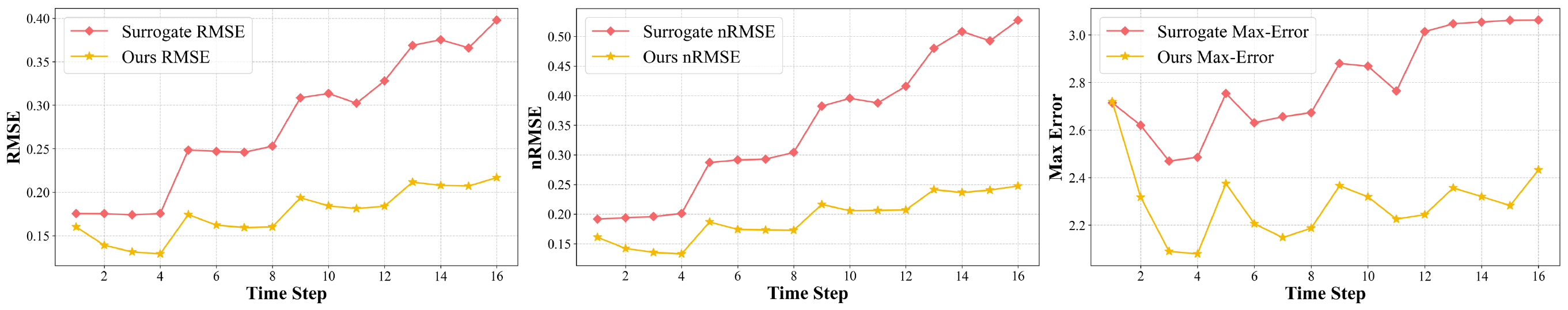}
  \vspace{-0.05in}
  \caption{Comparison of multi-step generalization between the surrogate model and ours.
  }   
  \label{fig:general_time}
\vspace{-0.15in}
\end{figure*}

\noindent\textbf{Generation on noise robustness and scaling ability} can be seen in Appendix \ref{sec:more_results} and \ref{sec:robust}.

\vspace{-2ex}
\section{Conclusion}\label{sec:conclusion}
\vspace{-2ex}

\method{} introduces a frequency-aware generative framework that addresses key limitations in turbulence modeling, such as \emph{spectral bias} and \emph{common-mode noise}. Through a novel dual-branch architecture and surrogate feature alignment, it achieves superior accuracy, physical consistency, and generalization across complex fluid dynamics tasks.

\clearpage
\section*{Acknowledgment}
We gratefully acknowledge the support of Westlake University Research Center for Industries of the Future, and Westlake University Center for High-performance Computing.
\bibliographystyle{unsrt}
\bibliography{rec.bib}

\clearpage
\appendix
\section*{APPENDIX}
\appendix

The appendix first presents a comprehensive symbol glossary (Table \ref{tab:symbols}) to unify notation across the paper.  It then supplies additional empirical evidence: (i) scaling studies in Section \ref{sec:more_results} showing how \method{}’s accuracy varies with parameter count (Figure \ref{fig:scaling}), (ii) planned robustness experiments, and (iii) extended baseline details in Section \ref{sec:supp_baseline}.  A dedicated Section \ref{sec:data_details} details the datasets we use, including governing equations, boundary-condition setups, and turbulence-inflow formulations, complemented by multiple flow-field visualizations. Theoretical material in Section \ref{sec:supp_theorem} follows: a frequency-domain analysis of diffusion and conditional flow-matching processes that formulates spectral bias and motivates the model’s design.  Subsequent Section \ref{sec:supp_arch} and \ref{sec:supp_implemnt} break down architecture components (SFA branch, Fourier mixing branch, surrogate encoder) and the training protocol, covering optimization hyper-parameters, hardware, and runtime statistics. The appendix concludes with discussions of limitations, future research directions, and broader societal impact in Section \ref{supp:limit}. Finally, visualization analysis is provided in Section \ref{sec:supp_vis}.

\section{Symbol}

Table \ref{tab:symbols} catalogues the principal mathematical symbols used throughout \method{}, clarifying each definition to eliminate ambiguity across methods.

\begin{table}[h]
\centering
\small
\caption{List of symbols used throughout the paper.}
\label{tab:symbols}
\begin{tabular}{ll}
\toprule
\textbf{Symbol} & \textbf{Meaning} \\
\midrule
$N$ & Number of patches (sequence length) \\
$d_{\text{model}}$ & Model hidden dimension \\
$W^{Q},\,W^{K},\,W^{V}$ & Query / Key / Value projection matrices \\
$Q_{1},\,Q_{2}$ & Queries in differential attention \\
$K_{1},\,K_{2}$ & Keys in differential attention \\
$V$ & Value projections \\
$d$ & Scaling dimension in attention ($\sqrt{d}$) \\
$\lambda$ & Balancing coefficient in differential attention \\
$\mathcal{N}(j)$ & $\kappa$ nearest neighbours of patch $j$ \\
$\kappa$ & Neighbourhood size (default $5$) \\
$\mu_{\mathcal{N}(j)}$ & Mean of $K_{2}$ within $\mathcal{N}(j)$ \\
$\tilde{K}_2[j]$ & Mean‐centred $K_{2}$ at patch $j$ \\
$\mathcal{K}$ & AFNO spectral operator \\
$u^{l}(t,x)$ & Feature field at layer $l$ and location $(t,x)$ \\
$\mathcal{F},\,\mathcal{F}^{-1}$ & Fourier / inverse-Fourier transforms \\
$\boldsymbol{\xi}$ & Spatial frequency index \\
$\mathbf{W}^{l}_{\theta}(\boldsymbol{\xi})$ & Complex spectral weight at layer $l$ \\
$\alpha_{\theta}^l,\,\beta_{\theta}^l$ & Learnable scaling / bias in FM branch \\
$\eta$ & Exponent controlling frequency weighting (init.\ $1$) \\
$u_{\text{SFA}},\,u_{\text{FM}}$ & Feature maps from SFA / FM branches \\
$\mathbf{G}$ & Gating map produced by $1{\times}1$ conv + sigmoid \\
$\sigma(\cdot)$ & Sigmoid activation function \\
$\gamma$ & Alignment coefficient \\
$\odot$ & Element-wise (Hadamard) product \\
$\mathbb{E}[\cdot]$ & Expectation operator \\
$\mathcal{L}_{\text{CFM}}$ & Core flow-matching loss \\
$\mathcal{L}_{\text{Align}}$ & Representation alignment loss \\
$\mathcal{L}_{\text{Total}}$ & Total loss ($\mathcal{L}_{\text{CFM}}+\gamma\mathcal{L}_{\text{Align}}$) \\
\bottomrule
\end{tabular}
\end{table}

\section{More Supplementary Results}
\label{sec:more_results}

\noindent\textbf{Scaling ability.} To probe the capacity limits of our generative PDE solver, we widen \method{}’s hidden dimension from 384 to 512 and 768, increasing the parameter count from 82 M (``Small”) to 161 M (``Normal”) and 353 M (``Big”).
As Figure \ref{fig:scaling} shows, the jump from 82 M to 161 M slashes error across all three metrics, confirming that additional width lets the model capture finer-scale dynamics. Pushing to 353 M, it still delivers modest gains.
\method{} obeys the familiar scaling curve: performance rises steeply with capacity until the model saturates the information content of the training data, after which returns diminish. In our setting, the 161 M parameter variant offers the best accuracy–efficiency trade-off, cutting the primary error metric by about 60\% relative to the 82 M model while requiring less than half the resources of the 353 M model. Going forward, we plan to (i) explore depth-wise scaling and (ii) enlarge the training corpus, two orthogonal directions that theory predicts should shift the saturation point and unlock further accuracy gains.

\begin{figure*}[h]
  \centering
  \includegraphics[width=0.65\linewidth]{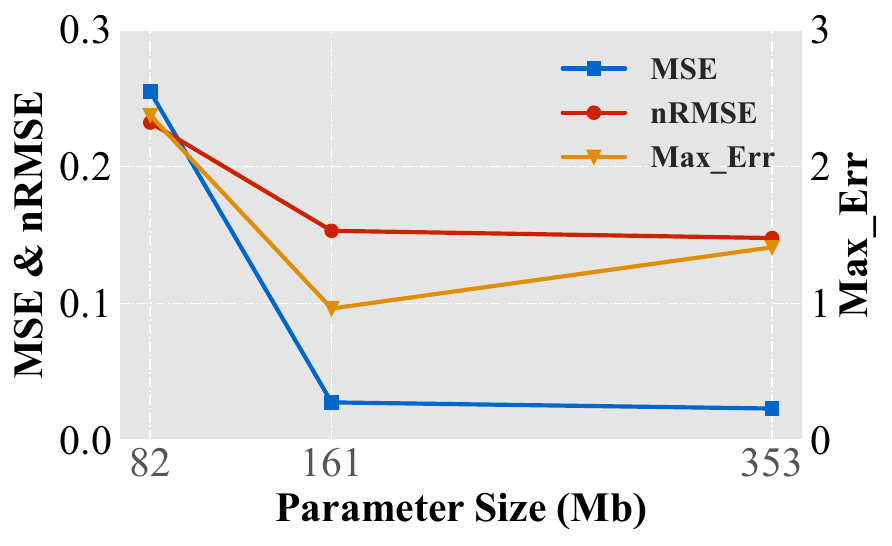}
  \vspace{-0.05in}
  \caption{Scaling ability of our proposed \method{}.
  }   
  \label{fig:scaling}

\end{figure*}

\section{Supplementary Robustness Results}
\label{sec:robust}

Recent work~\cite{chen2024diffusion} has revealed that diffusion-based generative models are robust to the noise attack. Specifically, when the input $x$ to a diffusion model is perturbed by $\delta$ (so that it becomes $x+\delta$), the resulting change in the model’s output is guaranteed to be no more than $\mathcal{O}(\delta)$. To validate the robustness, we conduct an experiment in which standard Gaussian noise is injected into the training data, and subsequently compare the robustness of the surrogate model against our generative approach. As shown in Table~\ref{tab:robustness}, FourierFlow-$\delta$ is noticeably less affected by noise perturbations compared with Ours-Surrogate-$\delta$.

\begin{table*}[h]
\scriptsize
\centering
\setlength{\tabcolsep}{2mm}{
\begin{tabular}{lc|ccc}
\toprule[1.1pt]
 \multicolumn{1}{l}{}& \multicolumn{1}{l}{}& \multicolumn{3}{c}{Compressible N-S} \\ \cmidrule(lr){3-5} 
\textsc{Model}
& \textsc{Parameter}
& \textsc{RMSE}$\downarrow$  & \textsc{nRMSE}$\downarrow$  &
\textsc{Max\_ERR}$\downarrow$  \\
\midrule
Ours-surrogate & 161M &0.0519&0.2033&1.6028 \\
Ours-surrogate-$\delta$ & 161M &0.0738&0.2429&2.5319
   \\
\midrule   
FourierFlow & 161M &0.0277&0.1530&0.9625
   \\
FourierFlow-$\delta$ & 161M &0.0459&0.1894&1.5771
   \\
\bottomrule[1.1pt]
\end{tabular}
}
\caption{
Evaluation of surrogate predictive and our generative models against noise attack.
}
\label{tab:robustness}
\end{table*}

\section{Baseline Details}
\label{sec:supp_baseline}

\subsection{Fourier Neural Operator (FNO)} FNO~\cite{li2020fourier} is a neural framework that learns mappings between infinite-dimensional function spaces.  It parameterises the integral kernel in the Fourier domain and processes data through a stack of Fourier layers, each of which applies a linear transformation in frequency space, effectively performing global convolutions at $\mathcal{O}(N \log N)$ cost. In our study we examine two settings. (1) 2D FNO. The input is a single two-dimensional flow field, and the model predicts one future time step at a time, iterating forward. (2) 3D FNO. The input is a sequence of flow fields spanning multiple steps, and the model directly forecasts several future time steps in a single pass. 

\subsection{Factorized Fourier Neural Operator (FFNO)} 
FFNO~\cite{tran2021factorized} refines the original FNO architecture to push neural solvers closer to the accuracy of state-of-the-art numerical and hybrid PDE methods. It replaces the dense Fourier kernels with separable spectral layers, factorizing the multidimensional convolution into lower-rank components that preserve global receptive fields while sharply reducing parameters and memory.  Deeper, carefully designed residual connections stabilize gradients, allowing the network to scale to dozens of layers without divergence.  During training, F-FNO adopts a trio of heuristics: the Markov assumption, which forecasts one step and rolls out to longer horizons; Gaussian noise injection, which regularities and enhances robustness; and a cosine learning-rate decay schedule that smooths optimization.

\subsection{Operator Transformer (OFormer).}
OFormer~\cite{li2022transformer} is an attention-centric framework that dispenses with rigid grid assumptions.  OFormer marries self-attention, cross-attention and lightweight point-wise MLPs to model the relationship between arbitrary query locations and input function values, whether the samples lie on uniform meshes or irregular point clouds.  A novel cross-attention decoder renders the model discretisation-invariant: given a set of context points, it predicts the output at any query coordinate without retraining.  To tackle time-dependent PDEs efficiently, it embeds a latent time-marching scheme that converts the evolution into an ordinary differential equation in a learned latent space, advancing the system with a fixed step size independent of the physical resolution.  The resulting Transformer architecture gracefully accommodates a variable number of input points, scales to high-dimensional domains and unifies operator approximation across disparate discretisations.

\subsection{Denoising Pre-training Operator
Transformer (DPOT)}
DPOT~\cite{hao2024dpot} introduces an auto-regressive denoising pre-training strategy that makes pre-training on PDE datasets both more stable and more efficient, while retaining strong transferability to diverse downstream tasks. Its Fourier-attention architecture is inherently flexible and scalable, allowing the model to grow smoothly for large-scale pre-training. In our work, we forgo this pre-training phase to ensure a fair comparison on the same training data.  Instead, we adopt DPOT’s core architecture, an efficient Transformer backbone equipped with Fourier-based attention.  A learnable, nonlinear transform in frequency space enables the network to approximate kernel integral operators and to learn PDE solution maps effectively.

\subsection{Video Vision Transformer (ViViT)}
ViViT~\cite{arnab2021vivit} encodes video by first slicing it into a grid of spatio-temporal tokens, which are then processed by Transformer layers.  To tame the prohibitively long token sequences that arise, the original paper introduces several factorized variants that decouple spatial and temporal interactions.  Building on this idea, we utilize ViViT’s spatio-temporal attention for the baseline and apply its two-stage factorization. It computes self-attention spatially, among all tokens sharing the same timestamp, followed by self-attention temporally, among tokens occupying the same spatial location across frames.  This succession of spatial-then-temporal attention preserves global context while reducing the quadratic cost to $\mathcal{O}(TS^{2} + ST^{2})$, making the model scalable to multi-step neural PDE solving without sacrificing expressiveness.

\subsection{Ours Surrogate}
In this baseline, we keep our full backbone, including the dual-branch design and the adaptive-fusion block, but remove the cross-attention fusion with the flow field at the initial step.  Instead, the model takes the initial condition alone and predicts the target state directly, and we replace our original loss with the mean-squared-error loss commonly used by deterministic predictors.  This strong variant allows us to isolate and quantify the gains our flow-based generative formulation provides over conventional prediction models in multi-step turbulence modeling.

\subsection{Diffusion Transformer (DiT)}
DiT~\cite{peebles2023scalable} is a latent diffusion model that swaps the conventional U-Net denoiser for a plain Transformer operating on VAE. By remaining almost entirely faithful to the vanilla Vision Transformer design (layer-norm first, GELU MLP blocks, sinusoidal position encoding), DiT inherits the favorable scaling laws of vision Transformers while retaining the sample quality of state-of-the-art diffusion models.  In our setting, we train a DiT variant to predict a single future flow field from an initial condition; long-horizon roll-outs are obtained by autoregressively feeding each prediction back into the model.  This provides a strong, architecture-matched baseline against which to measure our method in multi-step turbulence modeling.

\subsection{Diffusion Transformer (DiT-DDIM)}
DiT-DDIM~\cite{song2020denoising}  preserves the transformer-based denoising network while replacing the stochastic ancestral sampler with the deterministic DDIM procedure introduced by Song \textit{et al.}~\cite{song2020denoising}. DDIM interprets the forward diffusion process as a non-Markovian trajectory that can be exactly reversed by solving a first-order ordinary differential equation.  Concretely, given a pre-trained noise prediction model $\epsilon_\theta(\mathbf{x}_t,t)$, DDIM deterministically maps a noisy sample $\mathbf{x}_t$ at timestep $t$ to the previous timestep $t-1$ via
$\mathbf{x}_{t-1} = \sqrt{\alpha_{t-1}}\,
\Bigl( \frac{\mathbf{x}_t - \sqrt{1-\alpha_t}\,\epsilon_\theta(\mathbf{x}_t,t)}
{\sqrt{\alpha_t}} \Bigr)
+ \sqrt{1-\alpha_{t-1}-\sigma_t^2}\,\epsilon_\theta(\mathbf{x}_t,t)
+ \sigma_t\,\mathbf{z},$
where $\alpha_t$ is the cumulative noise schedule, $\sigma_t$ controls the residual stochasticity, and $\mathbf{z}\sim\mathcal{N}(0,\mathbf{I})$. 

\subsection{Conditional diffusion models for PDE (PDEDiff)}
PDEDiff~\cite{shysheya2024conditional} is a baseline from fluid simulation community which also utilizes generative model. It frames PDE modeling as conditional score-based diffusion and introduces three orthogonal upgrades that jointly handle forecasting and data assimilation. It designs autoregressive sampling, which denoises one physical step at a time and stabilizes long roll-outs compared with prior all-at-once samplers. It also uses universal amortized training, where a single conditional score network is trained over randomly sampled history lengths, allowing seamless operation across look-back windows without retraining. And a hybrid conditioning scheme is used to fuse amortized conditioning with post-training reconstruction guidance.

\subsection{Scalable Interpolant Transformers (SiT)}
SiT~\cite{ma2024sit} is a modular family of generative models that re-architects DiT through the lens of interpolant dynamics.  Instead of fixing the forward process to a simple variance-preserving diffusion, SiT adopts an interpolant framework that smoothly transports mass between the data and latent priors along learnable paths, enabling principled ablations over four orthogonal design axes: (i) time discretization, discrete DDPM-style chains versus their continuous ODE/SDE limits; (ii) model prediction target, noise, data, or velocity; (iii) choice of interpolant,simple linear blends, variance-expanding couplings, or score-matching geodesics; and (iv) sampling rule, ranging from deterministic DDIM-like solvers to stochastic ancestral samplers. SiT combines the best design choices identified in each component.

\subsection{Spatial-temporal Diffusion Transformer (STDiT)}
STDiT~\cite{opensora2} is a spatiotemporal variant that factors attention across space and time following the ViViT design. This architecture is widely utilized in video generation area.  Each flow trajectory is decomposed into $T$ frames of $H\times W$ patches, yielding a token sequence ${\mathbf{z}_{t,h,w}}$.  In every transformer block, we first apply temporal self-attention to tokens sharing spatial indices $(h,w)$, capturing motion dynamics, and then spatial self-attention within each frame to model appearance; linear projections are reused across both phases to keep parameter growth minimal.  This two-stage attention replaces DiT’s purely spatial mechanism, allowing STDiT to learn joint distributions over high-resolution videos with only $\mathcal{O}(T(HW)^2)$ instead of $\mathcal{O}((THW)^2)$ pairwise interactions.

\subsection{Conditional Flow Matching (CFM)}

CFM~\cite{lipman2022flow} learns a vector field $\mathbf{f}_\theta(\mathbf{x},t)$ such that integrating the ODE: $\dot{\mathbf{x}}=\mathbf{f}_\theta(\mathbf{x},t)$ transports samples from a simple prior to the data distribution along a probability path that is itself free to be optimized.  This framework unifies continuous normalizing flows and diffusion models: unlike score-based diffusion, CFM trains with a single quadratic loss and yields deterministic samplers whose complexity scales with the number of ODE solver steps needed for a given error tolerance, thereby boosting both fidelity and efficiency.  We transplant the CFM objective onto a ViViT backbone, factorizing the vector-field predictor into temporal-then-spatial attention blocks to exploit the local coherence of high-resolution flow fields.  The resulting CFM serves as a strong baseline for turbulence synthesis.

\subsection{Dynamics-informed Diffusion (DYffusion)}
DYffusion~\cite{ruhling2023dyffusion} is a dynamics-aware diffusion baseline for probabilistic spatiotemporal forecasting that targets the long-horizon stability issues typical in fluid-simulation roll-outs.  Unlike vanilla Gaussian-noise diffusion, DYffusion binds the model’s diffusion clock to the system’s physical clock: a time-conditioned stochastic interpolator learns the forward transition $\mathbf{x}_{t+\Delta t}=\mathcal{I}_\theta(\mathbf{x}_t,\Delta t,\tau)$, while a forecaster $\mathcal{F}\phi$ inverts this dynamics-coupled path, mirroring the reverse process of standard diffusion.  This coupling yields three practical advantages.  (i) Multi-step consistency. Because every diffusion step coincides with a physical time step, DYffusion naturally produces stable, coherent roll-outs over hundreds of frames without adversarial trajectory stitching.  (ii) Continuous-time sampling. At inference time, it integrates the learned vector field with adaptive ODE solvers, traversing arbitrary‐length horizons and trading accuracy for speed on-the-fly.

\section{Datasets Details}
\label{sec:data_details}
\subsection{Compressible Navier-Stokes Equation}
The compressible Navier-Stokes (CNS) equation describes a fluid flow and can be written as the continuity equation, the momentum equation, and the energy equation as follows:
\begin{equation}\label{eq:16}
\begin{aligned}
&\frac{\partial \rho}{\partial t} \;+\; \nabla \cdot (\rho \mathbf{v}) \;=\; 0, \\
&\rho \Bigl(\frac{\partial \mathbf{v}}{\partial t} + (\mathbf{v} \cdot \nabla) \mathbf{v}\Bigr) 
\;=\; -\,\nabla p \;+\;\eta \nabla \mathbf{v} 
\;+\;\bigl(\zeta + \tfrac{\eta}{3}\bigr)\,\nabla \bigl(\nabla \cdot \mathbf{v}\bigr), \\
&\frac{\partial}{\partial t} \biggl(\epsilon + \frac{\rho\,v^2}{2}\biggr)
+\nabla \cdot \biggl[\Bigl(\epsilon + p +\frac{\rho\,v^2}{2}\Bigr)\mathbf{v}
- \mathbf{v}\cdot \boldsymbol{\sigma}'\biggr] 
\;=\; 0,
\end{aligned}
\end{equation}
where \(\rho\) is the mass density, which represents the mass per unit volume. A higher \(\rho\) means the fluid is denser, leading to larger inertia and a potentially slower response to applied forces. \(\mathbf{v}\) is the velocity vector of the fluid, indicating both the speed and direction of the flow. \(p\) is the gas pressure, quantifying the force exerted per unit area by the fluid’s molecules. An increase in \(p\) can drive the fluid to expand or accelerate. \(\epsilon = \dfrac{p}{\Gamma - 1}\) is the internal energy per unit volume. It represents the energy stored in the microscopic motions and interactions of the fluid particles; higher pressure leads to greater internal energy. \(\Gamma = \tfrac{5}{3}\) is the adiabatic index (the ratio of specific heats), typical for monatomic gases. It characterizes how the fluid's pressure changes with volume under adiabatic (no heat exchange) processes. \(\boldsymbol{\sigma}'\) is the viscous stress tensor, which describes the internal friction within the fluid resulting from viscosity, dissipating kinetic energy into heat. \(\eta\) and \(\zeta\) are the shear and bulk viscosities, respectively. The shear viscosity \(\eta\) measures the fluid's resistance to deformations that change its shape (shearing), while the bulk viscosity \(\zeta\) quantifies its resistance to changes in volume (compression or expansion). \(\Lambda\) and \(\psi\) represent additional terms, such as external potentials or other source terms. Their exact physical role depends on the context, for example, modeling capillarity, external forces, or extra diffusion effects.

As for the CNS dataset setting, PDEBench~\cite{takamoto2022pdebench} is followed as the standard. It uses $N_d$ as the number of spatial dimensions, and the Mach number is defined by
\begin{equation}
    M \;=\; \frac{|v|}{c_s}, 
\quad c_s \;=\; \sqrt{\frac{\Gamma\,p}{\rho}},
\end{equation}
where $c_s$ is the sound speed. For outflow boundary conditions, a common approach is to copy the state from the nearest cell to the boundary. For inflow, different strategies are used in 1D, 2D, and 3D to specify velocity, density, and pressure.

A turbulent inflow condition can be imposed by superimposing fluctuations on a base flow with nearly uniform density and pressure. The velocity component may be represented by a Fourier-type sum (analogous to Eq.\,(8) in certain references):
\begin{equation}\label{eq:17}
v_x(t,z) \;=\; \sum_{i=1}^{n} A_i \,\sin\bigl(k_i\,x + \phi_i\bigr),
\end{equation}
where $n=4$ and $A_i = \bar{v}_i / k_i$ (the exact values depend on the desired amplitude and wavenumber). A typical Mach number might be $M = 0.12$ to reduce compressibility effects. This kind of velocity field can be viewed as a simplified Helmholtz decomposition in Fourier space.

The shock-tube initial field is composed as:
\begin{equation}
    Q_{SL}(x,\,t=0) \;=\; (Q_L,\;Q_R),
\end{equation}
where $Q = (\rho,\,p,\,\mathbf{v},\,\phi)$ denotes the fluid state. This is known as the ``Riemann problem,'' which typically includes shock waves and rarefaction waves whose precise form depends on the Mach number and flow conditions. It often requires a numerical solver to capture the full wave structure.
Such a scenario can be used to test whether a ML model truly understands the set of compressible flow equations. In practice, a second-order accurate HLLC flux-splitting scheme with a MUSCL approach is commonly employed for the inviscid part, while additional terms (\textit{e.g.}, viscosity or other source terms) may be discretized depending on the specific requirements.

\subsection{Incompressible Navier-Stokes Equation}

We consider a 2D periodic incompressible shear flow, a classical and practically significant configuration in fluid dynamics. A shear flow refers to a velocity field in which layers of fluid move parallel to each other at different speeds, causing continuous deformation due to velocity gradients. Such flows appear frequently in both natural environments (\textit{e.g.}, ocean currents, atmospheric jet streams) and engineering systems (\textit{e.g.}, lubrication, pipeline flows).

Following the data in \cite{ohana2024well}, the fluid dynamics are governed by the incompressible N-S equations, defined over a two-dimensional periodic domain $\Omega = [0, 1]^2$:

\begin{equation}
\begin{aligned}
\frac{\partial \mathbf{u}}{\partial t} + (\mathbf{u} \cdot \nabla) \mathbf{u} + \nabla p &= \nu \Delta \mathbf{u} + \mathbf{f}, & \
\nabla \cdot \mathbf{u} = 0
\end{aligned}
\label{eq:shear_ns}
\end{equation}

where $\mathbf{u} = (u_x, u_y)$ denotes the velocity field, $p$ is the pressure field, $\nu$ is the kinematic viscosity, $\mathbf{f}$ is an external forcing term, $\Delta$ is the Laplacian operator, and the second equation enforces the incompressibility constraint.
To generate a sustained shear structure, we impose a time-independent, spatially varying forcing function of the form:
\begin{equation}
\mathbf{f}(x, y) = \begin{pmatrix}
\alpha \sin(2\pi y) \
0
\end{pmatrix},
\end{equation}
where $\alpha$ is a constant controlling the strength of the external force in the $x$-direction, promoting shear across horizontal layers.
We apply periodic boundary conditions in both $x$ and $y$ directions:
\begin{equation}
\mathbf{u}(0, y, t) = \mathbf{u}(1, y, t), \quad \mathbf{u}(x, 0, t) = \mathbf{u}(x, 1, t), \quad \forall t \geq 0,
\end{equation}
ensuring that the domain forms a continuous torus and mimics unbounded behavior in a bounded computational setting.

The initial condition $\mathbf{u}_0(x, y)$ is typically a divergence-free random field or analytically constructed to align with shear characteristics (\textit{e.g.}, $u_x \propto \sin(2\pi y)$, $u_y = 0$), allowing the system to evolve under nonlinear convection and diffusion effects.

\section{Theoretical Analysis}
\label{sec:supp_theorem}

In the diffusion model, data is gradually perturbed through a forward diffusion process with additive noise, eventually becoming pure noise signals. In the reverse process (denoising), the model learns to reconstruct the original data from noise. Understanding how different frequency components evolve during forward diffusion is crucial for model design and performance. This paper rigorously demonstrates that at the beginning of diffusion, high-frequency information is disrupted first. As time progresses, low-frequency information is gradually affected.

Consider a continuous-time diffusion process, with initial data $\mathbf{x}_0$. As time $t \in [0, T]$ progresses, the data evolves into $\mathbf{x}_t$. This diffusion process can be described by the following stochastic differential equation:
\begin{equation}
    d\mathbf{x}_t = f(\mathbf{x}_t, t)dt + g(t)d\mathbf{w}_t
\end{equation}
where $\mathbf{w}_t$ is a Wiener process (Brownian motion), representing random noise. $f(\mathbf{x}_t, t)$ is the drift term. For simplification, we can set $f(\mathbf{x}_t, t) = 0$. $g(t)$ is a time-dependent diffusion coefficient that controls the strength of the noise.
With this simplification, the diffusion process becomes:
\begin{equation}
    d\mathbf{x}_t = g(t)d\mathbf{w}_t
\end{equation}
The initial condition is $\mathbf{x}_0$.

To analyze how the diffusion process affects different frequency components, we transform the signal from the time domain to the frequency domain. Let the Fourier transform of the signal $\mathbf{x}_t$ be denoted by $\hat{\mathbf{x}}_t(\omega)$, where $\omega$ represents frequency. The linearity of the Fourier transform and the properties of the Wiener process make analysis in the frequency domain more tractable.
Since the noise $d\mathbf{w}_t$ is white noise, its power spectral density is constant in the frequency domain.

\subsection{ Proof of Lemma 1 (Spectral Variance of Diffusion Noise)}

\setcounter{lemma}{0}

\begin{lemma}[Spectral Variance of Diffusion Noise]
For all frequencies $\omega$, the variance of $\hat{\varepsilon}_t(\omega)$ is given by
\[
\mathbb{E}\bigl[|\hat{\varepsilon}_t(\omega)|^2\bigr]
= \int_0^t |g(s)|^2 \,\mathrm{d}s,
\]
which is constant in~$\omega$.
\end{lemma}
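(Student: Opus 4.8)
The plan is to compute the frequency-domain representation of the accumulated diffusion noise directly from the SDE $d\mathbf{x}_t = g(t)\,d\mathbf{w}_t$ and then apply the Itô isometry to obtain its second moment. First I would write the solution of the (driftless) SDE in integral form, $\mathbf{x}_t = \mathbf{x}_0 + \int_0^t g(s)\,d\mathbf{w}_s$, so that the accumulated noise is $\varepsilon_t = \int_0^t g(s)\,d\mathbf{w}_s$. Taking the spatial Fourier transform and using its linearity (which commutes with the time integral by a stochastic Fubini argument), I get $\hat{\varepsilon}_t(\omega) = \int_0^t g(s)\,d\hat{\mathbf{w}}_s(\omega)$, where $\hat{\mathbf{w}}_s(\omega)$ is the Fourier transform of the Wiener process evaluated at frequency $\omega$.

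Next I would invoke the key spectral property of white noise: the driving Wiener process is spatially white, so its Fourier components across frequencies are (formally) uncorrelated with a flat power spectral density — equivalently, $\mathbb{E}[d\hat{\mathbf{w}}_s(\omega)\,\overline{d\hat{\mathbf{w}}_s(\omega)}] = ds$ independently of $\omega$. Combining this with the Itô isometry applied to the (deterministic) integrand $g(s)$ yields
\begin{equation}
\mathbb{E}\bigl[|\hat{\varepsilon}_t(\omega)|^2\bigr] = \int_0^t |g(s)|^2\,ds,
\end{equation}
with no dependence on $\omega$, which is exactly the claim. The constancy in $\omega$ is then just the observation that the right-hand side contains no $\omega$.

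The main obstacle — or at least the step requiring the most care — is justifying the interchange of the Fourier transform with the stochastic integral and making precise the statement that the Wiener process has a flat spectrum. Rigorously, $\hat{\mathbf{w}}_t(\omega)$ is a distribution-valued object (white noise in space), so the cleanest route is to avoid taking the Fourier transform of the noise pathwise and instead test against a fixed frequency mode: pair $\varepsilon_t$ with the complex exponential $e^{-i\omega x}$, obtaining a scalar Itô integral $\int_0^t g(s)\,d\beta_s^{(\omega)}$ where $\beta^{(\omega)}$ is a standard (complex) Brownian motion whose quadratic variation rate is normalized to $1$ by the spatial-whiteness assumption. Then the Itô isometry applies verbatim to this scalar integral and gives the result with no measure-theoretic subtleties. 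I would present the argument at this level of rigor, relegating the stochastic-Fubini justification to a remark, since the paper's context (a frequency-domain heuristic motivating an architecture) does not demand a fully distribution-theoretic treatment.
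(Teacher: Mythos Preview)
Your proposal is correct and follows essentially the same approach as the paper: write the noise as the It\^o integral $\varepsilon_t=\int_0^t g(s)\,d\mathbf{w}_s$, pass to the frequency domain, and combine the It\^o isometry with the flat spectral density of spatial white noise to conclude that the per-frequency variance equals $\int_0^t|g(s)|^2\,ds$. The only cosmetic difference is ordering --- the paper first applies the It\^o isometry in the spatial domain and then transfers to frequency via the $L^2$-isometry of the Fourier transform before invoking flatness, whereas you Fourier-transform first and apply the It\^o isometry per mode --- but the ingredients and level of rigor are the same.
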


\begin{proof}
We start from the integral form of the forward SDE,
$\mathbf{x}_t = \mathbf{x}_0 + \int_0^t g(s)\,\mathrm{d}\mathbf{w}_s,$
where $\mathbf{w}_s$ is a standard Wiener process (white noise).  Denote
$\varepsilon_t := \int_0^t g(s)\,\mathrm{d}\mathbf{w}_s,$
so that in the frequency domain
\begin{equation}
    \hat{\mathbf{x}}_t(\omega)
= \hat{\mathbf{x}}_0(\omega) + \hat{\varepsilon}_t(\omega),
\end{equation}
with $\hat{\varepsilon}_t(\omega) = \mathcal{F}\bigl[\varepsilon_t\bigr](\omega)$.

\paragraph{Zero mean.}
Since the It\^o integral has mean zero,
\begin{equation}
    \mathbb{E}\bigl[\varepsilon_t(x)\bigr] = 0
\quad\Longrightarrow\quad
\mathbb{E}\bigl[\hat{\varepsilon}_t(\omega)\bigr] = 0.
\end{equation}

\paragraph{Second moment via It\^o isometry.}
By the It\^o isometry in the spatial domain,
\begin{equation}
\mathbb{E}\bigl\|\varepsilon_t\bigr\|_{L^2_x}^2
= \mathbb{E}\int \Bigl|\int_0^t g(s)\,\mathrm{d}\mathbf{w}_s(x)\Bigr|^2 \,\mathrm{d}x
= \int_0^t |g(s)|^2\,\mathrm{d}s.
\end{equation}
Moreover, the Fourier transform $\mathcal{F}$ is an isometry on $L^2_x$, so
\begin{equation}
\mathbb{E}\bigl\|\hat{\varepsilon}_t\bigr\|_{L^2_\omega}^2
= \mathbb{E}\bigl\|\varepsilon_t\bigr\|_{L^2_x}^2
= \int_0^t |g(s)|^2\,\mathrm{d}s.
\end{equation}

\paragraph{Frequency-wise variance.}
White noise has flat spectral density and yields uncorrelated Fourier modes.  Concretely, for each fixed $\omega$,
\begin{equation}
\mathbb{E}\bigl[|\hat{\varepsilon}_t(\omega)|^2\bigr]
\;=\;
\int_0^t |g(s)|^2\,\mathrm{d}s,
\end{equation}
since no additional $\omega$-dependence arises, both the It\^o isometry and the $L^2$-isometry of the Fourier transform preserve the accumulated variance uniformly across frequencies.
Thus the spectral variance of the diffusion noise is identical for all~$\omega$.
\end{proof}

\subsection{Proof of Lemma 2 (Frequency-dependent SNR)}

\begin{lemma}[Frequency-dependent Signal-to-Noise Ratio]
Assume the initial signal $\mathbf{x}_0$ has a Fourier spectrum $\hat{\mathbf{x}}_0(\omega)$, and the diffusion follows $d\mathbf{x}_t = g(t) d\mathbf{w}_t$. Then the signal-to-noise ratio (SNR) at frequency $\omega$ is: 
\begin{equation}
    \text{SNR}(\omega) = \frac{|\hat{\mathbf{x}}_0(\omega)|^2}{\int_0^t |g(s)|^2 ds}
\end{equation}
\end{lemma}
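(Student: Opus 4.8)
The statement is essentially an immediate corollary of Lemma~\ref{lemma:noise_variance}, so the plan is short. First I would recall the frequency-domain decomposition already established in the proof of Lemma~\ref{lemma:noise_variance}: applying the Fourier transform to the integral form $\mathbf{x}_t = \mathbf{x}_0 + \int_0^t g(s)\,\mathrm{d}\mathbf{w}_s$ gives
\[
\hat{\mathbf{x}}_t(\omega) = \hat{\mathbf{x}}_0(\omega) + \hat{\varepsilon}_t(\omega),
\]
a clean splitting into a deterministic signal term $\hat{\mathbf{x}}_0(\omega)$ and a stochastic noise term $\hat{\varepsilon}_t(\omega) = \mathcal{F}[\int_0^t g(s)\,\mathrm{d}\mathbf{w}_s](\omega)$.

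Next I would identify the two quantities that enter the SNR. The \emph{signal power} at frequency $\omega$ is simply $|\hat{\mathbf{x}}_0(\omega)|^2$, since the initial condition is fixed (if $\mathbf{x}_0$ is treated as random, one interprets this as its power spectral density, i.e.\ $\mathbb{E}[|\hat{\mathbf{x}}_0(\omega)|^2]$, independent of the Wiener process). The \emph{noise power} at frequency $\omega$ is the variance $\mathbb{E}[|\hat{\varepsilon}_t(\omega)|^2]$, which Lemma~\ref{lemma:noise_variance} already evaluates as $\int_0^t |g(s)|^2\,\mathrm{d}s$, the same for every $\omega$. Defining $\mathrm{SNR}(\omega)$ as the ratio of signal power to noise variance then yields
\[
\mathrm{SNR}(\omega) = \frac{|\hat{\mathbf{x}}_0(\omega)|^2}{\int_0^t |g(s)|^2\,\mathrm{d}s},
\]
which is the claim. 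As a consistency check I would also note that, because $\hat{\varepsilon}_t(\omega)$ has zero mean and is independent of $\hat{\mathbf{x}}_0(\omega)$, the cross term vanishes in expectation and $\mathbb{E}[|\hat{\mathbf{x}}_t(\omega)|^2] = |\hat{\mathbf{x}}_0(\omega)|^2 + \int_0^t |g(s)|^2\,\mathrm{d}s$, so the decomposition into ``signal-plus-noise power'' is legitimate.

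The only place requiring care — and hence the ``main obstacle'', though it is minor — is pinning down the definition of the frequency-wise SNR and justifying that there is a genuine per-mode signal/noise separation: one must invoke that the Itô integral noise has uncorrelated Fourier modes (flat spectral density) so that $\hat{\varepsilon}_t(\omega)$ is the relevant per-frequency noise contribution, and that $\mathbf{x}_0$ is independent of the driving Brownian motion so no interference term survives. Once those conventions are fixed, the result is a one-line division.
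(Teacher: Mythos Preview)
Your proposal is correct and follows essentially the same approach as the paper: both use the frequency-domain decomposition $\hat{\mathbf{x}}_t(\omega) = \hat{\mathbf{x}}_0(\omega) + \hat{\varepsilon}_t(\omega)$, identify the signal power as $|\hat{\mathbf{x}}_0(\omega)|^2$ and the noise power as $\int_0^t |g(s)|^2\,\mathrm{d}s$ (via the It\^o isometry / Lemma~\ref{lemma:noise_variance}), note that the cross term vanishes by the zero-mean property of the It\^o integral, and take the ratio. Your ``consistency check'' is in fact the paper's main computational step, so the two proofs coincide.
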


\begin{proof}
We decompose the Fourier‐domain observation at time $t$ as $\hat{\mathbf{x}}_t(\omega)
= \hat{\mathbf{x}}_0(\omega)
+ \hat{\varepsilon}_t(\omega),$
where
$\hat{\varepsilon}_t(\omega)
= \mathcal{F}\Bigl[\int_0^t g(s)\,\diff \mathbf{w}_s\Bigr](\omega)$
is the Fourier transform of the accumulated diffusion noise.  

\paragraph{Power spectral density.}  
By definition, the power spectral density of $\mathbf{x}_t$ at frequency $\omega$ is
\begin{equation}
    S_{\mathbf{x}_t}(\omega)
= \mathbb{E}\bigl[|\hat{\mathbf{x}}_t(\omega)|^2\bigr].
\end{equation}
Substituting the decomposition gives
\begin{equation}
    \begin{aligned}
        S_{\mathbf{x}_t}(\omega)
&= \mathbb{E}\Bigl[\bigl|\hat{\mathbf{x}}_0(\omega) + \hat{\varepsilon}_t(\omega)\bigr|^2\Bigr]\\
&= \bigl|\hat{\mathbf{x}}_0(\omega)\bigr|^2
+ 2\,\Re\Bigl(\hat{\mathbf{x}}_0(\omega)\,\mathbb{E}\bigl[\hat{\varepsilon}_t^*(\omega)\bigr]\Bigr)
+ \mathbb{E}\bigl[|\hat{\varepsilon}_t(\omega)|^2\bigr].
    \end{aligned}
\end{equation}
Since the It\^o integral has zero mean $\mathbb{E}\bigl[\hat{\varepsilon}_t(\omega)\bigr]=0,$
the cross term vanishes and we obtain
\begin{equation}
    S_{\mathbf{x}_t}(\omega)
= \bigl|\hat{\mathbf{x}}_0(\omega)\bigr|^2
+ \mathbb{E}\bigl[|\hat{\varepsilon}_t(\omega)|^2\bigr].
\end{equation}

\paragraph{Noise variance via It\^o isometry.}  
By the It\^o isometry and the $L^2$‐isometry of the Fourier transform,
\begin{equation}
\mathbb{E}\bigl[|\hat{\varepsilon}_t(\omega)|^2\bigr]
= \int_0^t |g(s)|^2 \,\diff s,
\end{equation}
which is independent of $\omega$.

\paragraph{Definition of SNR.}  
We define the SNR at frequency $\omega$ as the ratio of signal‐power to noise‐power:
\begin{equation}
    \mathrm{SNR}(\omega)
= \frac{\bigl|\hat{\mathbf{x}}_0(\omega)\bigr|^2}
       {\displaystyle\int_0^t |g(s)|^2 \,\diff s}.
\end{equation}
\end{proof}

\subsection{Proof of Lemma 3 (Time to Reach SNR Threshold)}

\begin{lemma}[Time to Reach SNR Threshold]
\label{lemma:snr_threshold}
Fix a signal-to-noise ratio threshold $\gamma > 0$. Then, the time $t_\gamma(\omega)$ at which the SNR of frequency $\omega$ drops below $\gamma$ is given by:
\[
\int_0^{t_\gamma(\omega)} |g(s)|^2 ds = \frac{|\hat{\mathbf{x}}_0(\omega)|^2}{\gamma}
\]
\end{lemma}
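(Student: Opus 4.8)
The plan is to derive the characterization of $t_\gamma(\omega)$ directly from Lemma~\ref{lemma:snr}, treating the SNR as a deterministic function of time for each fixed frequency $\omega$ and then inverting the threshold condition. Write $N(t) := \int_0^t |g(s)|^2\,\mathrm{d}s$ for the accumulated noise variance, which by Lemma~\ref{lemma:noise_variance} is the frequency-independent noise power. Lemma~\ref{lemma:snr} then gives $\mathrm{SNR}(\omega) = |\hat{\mathbf{x}}_0(\omega)|^2 / N(t)$ at time $t$, so the object I need to control is simply the scalar map $t \mapsto |\hat{\mathbf{x}}_0(\omega)|^2 / N(t)$.

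First I would record the elementary analytic properties of $N$: since $|g(s)|^2 \ge 0$, the function $N$ is nonnegative, nondecreasing, and continuous on $[0,T]$ with $N(0)=0$ (continuity being the absolute continuity of the Lebesgue integral in its upper limit). Under the standing assumption that the forward process actually injects noise — i.e.\ $g$ is not identically zero on any initial interval — $N$ is strictly increasing, hence injective, once it leaves $0$. Consequently $t \mapsto |\hat{\mathbf{x}}_0(\omega)|^2 / N(t)$ is continuous and strictly decreasing, tending to $+\infty$ as $t \downarrow 0$ and to the terminal value $|\hat{\mathbf{x}}_0(\omega)|^2 / N(T)$ at $t=T$.

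Next I would set $t_\gamma(\omega) := \inf\{\, t \in [0,T] : |\hat{\mathbf{x}}_0(\omega)|^2 / N(t) \le \gamma \,\}$, which is well posed precisely in the regime $N(T) \ge |\hat{\mathbf{x}}_0(\omega)|^2/\gamma$ (otherwise that frequency never becomes noise-dominated within the horizon). By continuity of $N$ and the intermediate value theorem the infimum is attained and the defining inequality becomes an equality there, so $|\hat{\mathbf{x}}_0(\omega)|^2 / N(t_\gamma(\omega)) = \gamma$. Rearranging gives exactly $N(t_\gamma(\omega)) = |\hat{\mathbf{x}}_0(\omega)|^2 / \gamma$, the claimed identity, and strict monotonicity of $N$ yields uniqueness of this crossing time. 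The combination with the power-law hypothesis $|\hat{\mathbf{x}}_0(\omega)|^2 \propto |\omega|^{-\alpha}$ then produces the scaling $t_\gamma(\omega) \propto |\omega|^{-\alpha}$ of Theorem~\ref{thm:spectral_bias} (assuming, say, $N$ behaves linearly near the relevant times, so that $N^{-1}$ is proportional to its argument).

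The only genuine subtlety — and the step I would flag as the main, though mild, obstacle — is well-definedness: one must assume enough on the noise schedule (local integrability of $|g|^2$, non-degeneracy so $N$ is strictly increasing, and a horizon long enough that $N(T)$ exceeds $|\hat{\mathbf{x}}_0(\omega)|^2/\gamma$) for a crossing time to exist and be unique. Everything else is just the intermediate value theorem applied to a monotone continuous scalar function. I would state these as standing hypotheses on the diffusion coefficient rather than belabor them, since they hold for every schedule used in practice (e.g.\ variance-exploding schedules with $N(t)\to\infty$, or any $g$ with $|g(t)|^2$ bounded away from $0$).
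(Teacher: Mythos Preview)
Your proposal is correct and follows essentially the same approach as the paper: invoke Lemma~\ref{lemma:snr} to write $\mathrm{SNR}_t(\omega)=|\hat{\mathbf{x}}_0(\omega)|^2/N(t)$, observe that $N(t)=\int_0^t|g(s)|^2\,\mathrm{d}s$ is continuous and strictly increasing, and conclude that there is a unique crossing time satisfying the stated integral identity. Your treatment is in fact more careful than the paper's, which simply asserts continuity and strict monotonicity without flagging the mild regularity and non-degeneracy hypotheses on $g$ that you rightly make explicit.
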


\begin{proof}
By Lemma 2, $\mathrm{SNR}_t(\varepsilon)\le\gamma$ if and only if
\begin{equation}
    \frac{|\widehat{x}_0(\varepsilon)|^2}{\int_{0}^{t} |g(s)|^2\,ds}\le\gamma,
\end{equation}
\textit{i.e.}
\begin{equation}
    \int_{0}^{t}|g(s)|^2\,ds \;\ge\;\frac{|\widehat{x}_0(\varepsilon)|^2}{\gamma}.
\end{equation}
Since the left‐hand side is continuous and strictly increasing in $t$, there is a unique crossing time given by the stated integral equation.
\end{proof}

\subsection{Proof of Theorem 1 (Spectral Bias in Generative Models)}

\setcounter{theorem}{0} 
\begin{theorem}[Spectral Bias in Generative Models]
Assume the initial signal power obeys a power–law decay,
\[
|\widehat{x}_0(\varepsilon)|^2\propto|\varepsilon|^{-\omega},\quad\omega>0.
\]
Then the threshold time $t_\gamma(\varepsilon)$ satisfies
\[
t_\gamma(\varepsilon)\propto|\varepsilon|^{-\omega},
\]
\textit{i.e.,}\ higher‐frequency components (larger $|\varepsilon|$) reach the noise-dominated regime earlier.
\end{theorem}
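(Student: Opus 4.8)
The plan is to combine Lemma~\ref{lemma:snr_threshold} with the assumed power-law decay of the signal spectrum, treating the cumulative noise power $G(t) := \int_0^t |g(s)|^2\,\mathrm{d}s$ as the natural ``clock'' of the diffusion. First I would recall from Lemma~\ref{lemma:snr_threshold} that the threshold time $t_\gamma(\omega)$ is defined implicitly by $G(t_\gamma(\omega)) = |\hat{\mathbf{x}}_0(\omega)|^2/\gamma$. Since $g$ is (assumed) nonvanishing on $(0,T]$, the map $t \mapsto G(t)$ is continuous and strictly increasing, hence invertible on its range, so $t_\gamma(\omega) = G^{-1}\!\bigl(|\hat{\mathbf{x}}_0(\omega)|^2/\gamma\bigr)$ is well-defined.

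Next I would substitute the power-law assumption $|\hat{\mathbf{x}}_0(\omega)|^2 = c\,|\omega|^{-\alpha}$ for some constant $c>0$, giving
\begin{equation}
G\bigl(t_\gamma(\omega)\bigr) = \frac{c}{\gamma}\,|\omega|^{-\alpha}.
\end{equation}
To extract the stated proportionality $t_\gamma(\omega) \propto |\omega|^{-\alpha}$ one needs $G^{-1}$ to be (asymptotically) linear near the origin, which holds under the mild normalization that $g(t)^2 \to$ const as $t \to 0^+$ (e.g.\ the common variance-exploding schedule with constant or smoothly varying $g$), so that $G(t) \sim \kappa t$ for small $t$ and hence $G^{-1}(y) \sim y/\kappa$. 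Under that convention, $t_\gamma(\omega) \sim \tfrac{c}{\kappa\gamma}\,|\omega|^{-\alpha}$, which is exactly the claimed scaling; for a general monotone schedule the conclusion still holds up to the reparametrization by $G$, i.e.\ $G(t_\gamma(\omega)) \propto |\omega|^{-\alpha}$ exactly. Finally I would note the monotonicity consequence: since $\alpha>0$, larger $|\omega|$ gives a smaller value of $|\omega|^{-\alpha}$, hence a smaller $t_\gamma(\omega)$ because $G$ (and thus $G^{-1}$) is increasing — so high-frequency modes cross into the noise-dominated regime strictly earlier, establishing the spectral bias.

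The only real subtlety — and the step I would be most careful about — is the passage from ``$G(t_\gamma(\omega))$ is proportional to $|\omega|^{-\alpha}$'' to ``$t_\gamma(\omega)$ itself is proportional to $|\omega|^{-\alpha}$''; these coincide only when $G$ is linear, and more generally the clean proportionality should be read either asymptotically (small-$t$, equivalently high-$\omega$) or as a statement modulo the monotone time-change $G$. I would state this explicitly rather than sweep it under the rug, since the qualitative conclusion (monotone ordering of corruption times across frequencies) is robust to the schedule while the exact power law is the one place the argument leans on a normalization. Everything else is a direct substitution into Lemma~\ref{lemma:snr_threshold} plus monotonicity of $G$.
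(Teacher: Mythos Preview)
Your proposal is correct and follows the same route as the paper: invoke Lemma~\ref{lemma:snr_threshold}, substitute the power-law spectrum, and invert via monotonicity of $G(t)=\int_0^t|g(s)|^2\,ds$. In fact you are more careful than the paper on the one subtle point --- the paper simply asserts that monotonicity of $G$ lets one ``invert'' to $t_\gamma \propto |\omega|^{-\alpha}$, whereas you correctly flag that exact proportionality requires $G$ to be (asymptotically) linear, while only the qualitative ordering of corruption times is guaranteed for a general schedule.
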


\begin{proof}
Combining Lemma 3 with the power–law assumption gives
\begin{equation}
\int_{0}^{t_\gamma(\varepsilon)}|g(s)|^2\,ds
=\frac{|\widehat{x}_0(\varepsilon)|^2}{\gamma}
\propto\frac{|\varepsilon|^{-\omega}}{\gamma}.
\end{equation}
Since $\int_{0}^{t}|g(s)|^2\,ds$ is monotonically increasing in $t$, we invert this relation to obtain
\begin{equation}
t_\gamma(\varepsilon)\;\propto\;|\varepsilon|^{-\omega}.
\end{equation}
Thus as $|\varepsilon|$ grows, $t_\gamma(\varepsilon)$ decreases, proving that high‐frequency modes are corrupted earlier.
\end{proof}

\section{Architecture Details}
\label{sec:supp_arch}

\subsection{Flow Matching Training Mechanism}

To probe the aptitude of modern generators for turbulent–flow synthesis, we cast both flow and diffusion models within the \emph{stochastic interpolant} framework of \cite{albergo2023stochastic}.  The remainder of this section outlines our learning procedure from that standpoint.  

\smallskip
\noindent\textbf{Continuous–time formulation.}
Whereas DDPMs \cite{ho2020denoising} rely on a discrete, noise-perturbation chain, flow-based generators \citep{esser2024scaling,lipman2022flow,liu2022flow} view synthesis as a continuous path $\rvx(t)$, $t\!\in[0,1]$, that starts from data $\rvx(0)\!\sim p(\rvx)$ and ends in white noise $\rvepsilon\!\sim\!\mathcal{N}(\mathbf{0},\mathbf{I})$:
\begin{equation}
\label{eq:interp}
    \rvx(t)=\alpha_t\,\rvx(0)+\sigma_t\,\rvepsilon,
    \quad
    \alpha_0=\sigma_1=1,\;
    \alpha_1=\sigma_0=0,
\end{equation}
with $\alpha_t$ decreasing and $\sigma_t$ increasing in~$t$.  
Equation~\ref{eq:interp} induces the \emph{probability-flow ODE}
\begin{equation}
\label{eq:pflow}
    \dot{\rvx}(t)=\rvv\!\left(\rvx(t),t\right), 
\end{equation}
whose solution at each instant matches the marginal $p_t(\rvx)$.

\smallskip
\noindent\textbf{Velocity objective.}
The drift in Equation \ref{eq:pflow} splits into two conditional expectations,
\begin{equation}
\label{eq:v_exact}
    \rvv(\rvx(t),t)=
    \dot{\alpha}_t\,\mathbb{E}\!\left[\rvx(0)\!\mid\!\rvx(t)=\rvx\right]
    +\dot{\sigma}_t\,\mathbb{E}\!\left[\rvepsilon\!\mid\!\rvx(t)=\rvx\right],
\end{equation}
which we approximate with a neural predictor $\rvv_\theta$.  
Matching $\rvv_\theta$ to the target
$\rvv^\ast=\dot{\alpha}_t\,\rvx(0)+\dot{\sigma}_t\,\rvepsilon$
yields the MSE loss as follows:
\begin{equation}
\label{eq:flow_loss_}
    \mathcal{L}_{\text{flow}}
    =\mathbb{E}_{t\sim\mathcal{U}(0,1)}
      \bigl[\lVert\rvv_\theta(\rvx(t),t)-\rvv^\ast(\rvx(t),t)\rVert^2\bigr].
\end{equation}

\smallskip
\noindent\textbf{Score objective and duality.}
The same dynamics admit the reverse SDE:
\begin{equation}
\label{eq:sde_appendix}
    d\rvx(t)=\rvv(\rvx(t),t)\,dt
    -\tfrac12\,w_t\,\rvs(\rvx(t),t)\,dt
    +\sqrt{w_t}\,d\bar{\mathbf{w}}_t,
\end{equation}
whose score is
\begin{equation}
\label{eq:score_exact}
    \rvs(\rvx(t),t)
    =-\frac{1}{\sigma_t}\,
      \mathbb{E}\!\left[\rvepsilon\!\mid\!\rvx(t)=\rvx\right].
\end{equation}
A network $\rvs_\theta$ is trained with
\begin{equation}
\label{eq:score_loss}
    \mathcal{L}_{\text{score}}
    =\mathbb{E}_{\rvx(0),\rvepsilon,t}
      \bigl[\lVert\sigma_t\,\rvs_\theta(\rvx(t),t)+\rvepsilon\rVert^2\bigr].
\end{equation}
Because $\rvs$ and $\rvv$ are analytically related via
\begin{equation}
\label{eq:score_from_v}
    \rvs(\rvx,t)=\frac{\alpha_t\,\rvv(\rvx,t)-\dot{\alpha}_t\,\rvx}
                      {\sigma_t\,\dot{\alpha}_t-\alpha_t\,\dot{\sigma}_t},
\end{equation}
estimating either one is sufficient.

\smallskip
\noindent\textbf{Choosing the interpolant.}
Recent work~\cite{albergo2023stochastic} shows that any differentiable pair $(\alpha_t,\sigma_t)$ satisfying
$\alpha_t^2+\sigma_t^2>0$ and the boundary conditions in Equation ~\ref{eq:interp}
creates an unbiased bridge between data and noise.
Common examples are the linear schedule $(\alpha_t,\sigma_t)=(1-t,t)$
and the variance-preserving choice
$(\alpha_t,\sigma_t)=\bigl(\cos\!\tfrac{\pi}{2}t,\;\sin\!\tfrac{\pi}{2}t\bigr)$
\citep{ma2024sit}.  Crucially, the diffusion coefficient $w_t$ in Equation~\ref{eq:sde_appendix} is \emph{decoupled} from training and can be selected \emph{post-hoc} during sampling.

\smallskip
\noindent\textbf{Relation to DDPMs.}
Classical score-based models (\textit{e.g.}, DDPM) can be recast as discretised SDEs whose forward pass drifts towards an isotropic Gaussian as $t\!\to\!\infty$.  After training on a finite horizon $T$ (typically $10^3$ steps), generation proceeds by integrating the corresponding reverse SDE from $\mathcal{N}(\mathbf{0},\mathbf{I})$.  In that setting, $(\alpha_t,\sigma_t)$ and $w_t$ are locked in by the forward schedule, which may overly constrain design flexibility \citep{song2019generative}.

\subsection{Salient Flow Attention (SFA) Branch}

This branch is built upon the Transformer architecture. Beyond the attention mechanism described in the main text, we now detail the remaining components. First, we apply the patch embedding technique from ViT \cite{dosovitskiy2020image} independently to each video frame. Specifically, when extracting non-overlapping image patches of size $h \times w$ from a frame of spatial dimension $H \times W$, the number of patches per frame along height and width becomes $n_h = H / h$ and $n_w = W / w$, respectively, while $n_f = F$ corresponds to the total frame count.
The second approach extends ViT’s patch embedding into the temporal domain. Here, we extract spatio-temporal “tubes” of size $h \times w \times s$ with temporal stride $s$, yielding $n_f = F / s$ tube-tokens per clip. This inherently fuses spatial and temporal information at the embedding stage. When using this tube-based embedding, we further employ a 3D transposed convolution for temporal upsampling in the decoder, following the standard linear projection and reshaping operations.
To enable temporal awareness, we inject positional embeddings into the model via two strategies: (1) absolute positional encoding, which uses sine and cosine functions at different frequencies to encode each frame’s absolute position in the sequence \cite{vaswani2017attention}; and (2) relative positional encoding, which leverages rotary positional embeddings (RoPE) to model pairwise temporal relationships between frames \cite{su2024roformer}. 
The spatial-temporal Transformer block focuses on decomposing the multi-head attention in the Transformer block. We initially compute SFA only on the spatial dimension, followed by the temporal dimension. As a result, each Transformer block captures both spatial and temporal information. The inputs for spatial SFA and temporal SFA are $\boldsymbol{z_s} \in \mathbb{R}^{n_f \times s \times d}$ and $\boldsymbol{z_t} \in \mathbb{R}^{s \times n_f \times d}$, respectively.

\subsection{Fourier Mixing Branch}

Our FM branch is based on AFNO~\cite{guibas2021adaptive}, which overcomes the scalability and rigidity of the standard FNO. AFNO introduces three key enhancements. First, it replaces dense $d\times d$ token-wise weight matrices with a block‐diagonal decomposition into $k$ blocks of size $\tfrac{d}{k}\times\tfrac{d}{k}$, enabling independent, parallelizable projections analogous to multi‐head attention. Second, it replaces static, token‐wise weights with a shared two‐layer perceptron, ${\rm MLP}(z_{m,n}) = W_{2}\,\sigma(W_{1}z_{m,n}) + b$, so that all tokens adaptively recalibrate channel mixing using a fixed, sample‐agnostic parameter set. Finally, to exploit the inherent sparsity in the Fourier domain, it applies channel‐wise soft‐thresholding via
\begin{equation}
    \tilde{z}_{m,n} \;=\; S_{\lambda}\bigl(W_{m,n}z_{m,n}\bigr), 
\quad
S_{\lambda}(x)=\mathrm{sign}(x)\max\{|x|-\lambda,0\},
\end{equation}
which simultaneously prunes low‐energy modes and regularizes the network. Collectively, these modifications yield the AFNO mixer, whose block structure, weight sharing, and shrinkage jointly improve parameter efficiency, expressivity, and robustness compared to FNO and self‐attention.  

\subsection{Pre-trained Surrogate Architecture}

We utilize the backbone of ViViT for MAE pretraining.
It extends ViT~\cite{dosovitskiy2020image} to video by tokenizing spatio‐temporal inputs and applying transformer‐based attention across both space and time.  First, a video clip $\mathbf{V}\!\in\!\mathbb{R}^{T\times H\times W\times C}$ is mapped to tokens either by (i) uniformly sampling $n_t$ frames and extracting $n_h\times n_w$ 2D patches per frame, or (ii) extracting non‐overlapping 3D “tubelets” of size $t\times h\times w$, yielding $n_t\!=\!\lfloor T/t\rfloor$, $n_h\!=\!\lfloor H/h\rfloor$, and $n_w\!=\!\lfloor W/w\rfloor$ tokens per clip.  Positional embeddings are then added before feeding the sequence into a transformer encoder.  

ViViT offers four architectural variants to balance modeling power and cost:  
\emph{Joint Space‐Time Attention}. It computes full self‐attention over all $n_t n_h n_w$ tokens and captures long‐range interactions from the first layer. \emph{Factorized Encoder}. It applies a spatial transformer to each frame independently and then a temporal transformer over the resulting frame‐level embeddings, reducing FLOPs from $\mathcal{O}((n_t n_h n_w)^2)$ to $\mathcal{O}((n_h n_w)^2 + n_t^2)$. \emph{Factorized Self‐Attention}. It alternates spatial and temporal self‐attention within each layer, achieving the same complexity as the factorized encoder while maintaining a unified block structure. \emph{Factorized Dot‐Product Attention}. It allocates half of the attention heads to spatial neighborhoods and half to temporal neighborhoods, preserving parameter count but reducing compute to $\mathcal{O}((n_h n_w)^2 + n_t^2)$.
These designs equip ViViT with flexible trade‐offs between expressivity and efficiency for high‐fidelity spatial-temporal understanding.

\section{Implementation Details.}
\label{sec:supp_implemnt}

Following the original SiT implementation~\cite{albergo2023stochastic}, we integrate our proposed architectural enhancements and train the model using the AdamW optimizer with a cyclic learning‐rate schedule (initial learning rate $1\times10^{-4}$) to facilitate periodic warm‐up and cooldown phases (\texttt{torch.optim.lr\_scheduler.CyclicLR}). Training is accelerated across multiple GPUs via the \texttt{Accelerator} framework, and gradients are clipped to stabilize convergence.

All experiments were conducted on 8 $\times$ NVIDIA H800 80GB GPUs, achieving a sustained throughput of approximately 1.23 steps/s with a global batch size of 360. This performance can be further improved through additional engineering optimizations, such as precomputing features from the pretrained encoder.

\section{Limitations and Broader Impact}
\label{supp:limit}

\paragraph{Limitations.}  
While our flow‐based modeling framework demonstrates strong accuracy, it incurs substantial training costs due to the iterative sampling and high‐dimensional feature mixing. Accelerating convergence remains an open challenge: potential directions include deriving mathematically principled guidance terms to reduce sampling iterations, designing more efficient transformer variants tailored to Fourier‐domain operations, and developing lightweight context‐encoding modules that minimize redundant computation. Addressing these aspects will be crucial for scaling our approach to even larger spatio‐temporal domains and real‐time applications.

\paragraph{Broader Impact.}  
By advancing the fidelity and scalability of turbulence simulation, our work paves the way for real‐world deployment in domains such as aerodynamics, structural materials design, and plasma physics. More generally, our method for simulating complex physical systems holds promise for climate forecasting, fusion‐plasma modeling, and other scientific or engineering applications. At the same time, we acknowledge potential risks: highly realistic simulations could be misused to design unsafe or unethical systems. We therefore advocate for responsible dissemination and continued vigilance to ensure that the benefits of our research are realized without adverse societal consequences.

\section{Visualization Analysis}
\label{sec:supp_vis}

\noindent\textbf{Case study.} We compare Ours-Surrogate and \method{} on the CNS benchmark by training each with four input time steps to predict the next step, and then rolling out their forecasts for 20 additional steps.  Figure \ref{fig:vis_1}, \ref{fig:vis_2} \ref{fig:vis_3} and \ref{fig:vis_4} visualize the resulting density, pressure, and velocity fields.  Both approaches reproduce the global structure of the flow, but a closer inspection reveals marked differences: the surrogate’s predictions blur or miss fine-scale turbulent features, whereas our method more faithfully reconstructs the rapidly evolving details.  Moreover, the surrogate often exhibits patch artifacts, a by-product of its patch-wise training regime, while our holistic, end-to-end generative optimization largely avoids this issue, though a small amount of residual noise can still be seen in a few frames.

\noindent\textbf{Empirical evidence of spectral bias.}
In order to quantify and visualize the distribution of spatial frequencies in the ground-truth fields and the predicted error, we proceed as follows.  For each sample index $i$ and channel $j$, we first compute the two-dimensional discrete Fourier transform
$\hat{G}_{i,j}(u,v) \;=\; \mathrm{FFT}\bigl(G_{i,j}(x,y)\bigr),$
and apply a Fourier shift so that the zero frequency component is centered.  The spectral energy at each frequency bin is then given by
$E_{i,j}(u,v) \;=\; \bigl|\hat{G}_{i,j}(u,v)\bigr|^2.$
Next, we convert the Cartesian frequency coordinates (u,v) into a radial wavenumber
$k = \sqrt{u^2 + v^2}$,
and discretize $k$ into integer bins $k=0,1,2,\dots,K_{\max}$.  For each integer bin $k$, we accumulate the total spectral energy
$E_{i,j}(k) \;=\; \sum_{(u,v)\,:\,\lfloor\sqrt{u^2+v^2}\rfloor = k} E_{i,j}(u,v),$
and plot $\log E_{i,j}(k)$ against $k$ as a bar chart.  Each subplot in Figure \ref{fig:spectrum} corresponds to one channel $j$ of sample $i$, with the horizontal axis denoting the wavenumber $k$ and the vertical axis denoting the logarithm of the integrated spectral energy.  This presentation makes it easy to compare the decay of energy across scales, and to assess how well different models preserve or distort information at each spatial frequency.

\begin{figure*}[h]
  \centering
  \includegraphics[width=\linewidth]{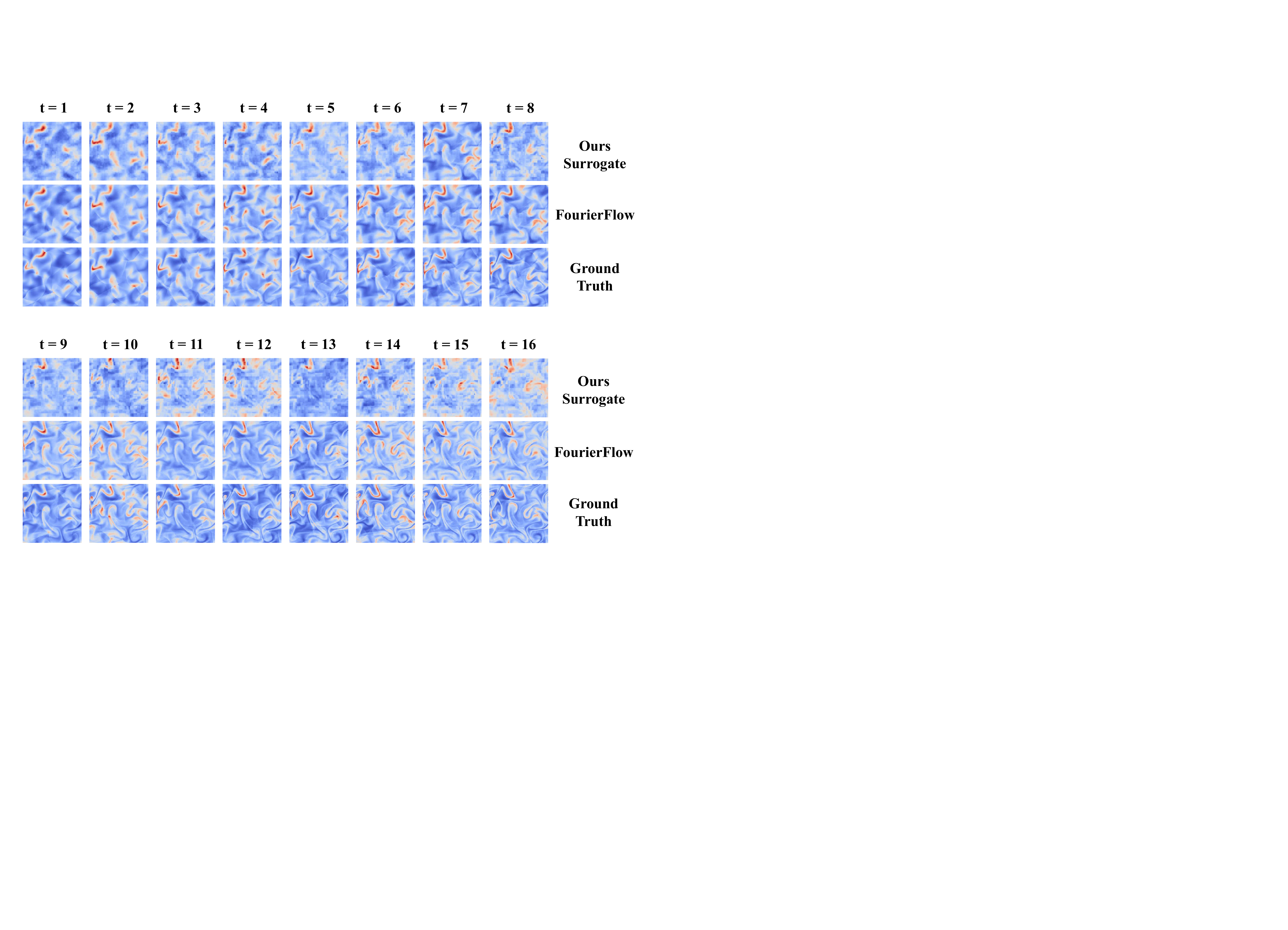}
  \caption{Visualization of the density fields produced by two methods for the compressible N-S.
  }   
  \label{fig:vis_1}
\end{figure*}

\begin{figure*}
  \centering
  \includegraphics[width=\linewidth]{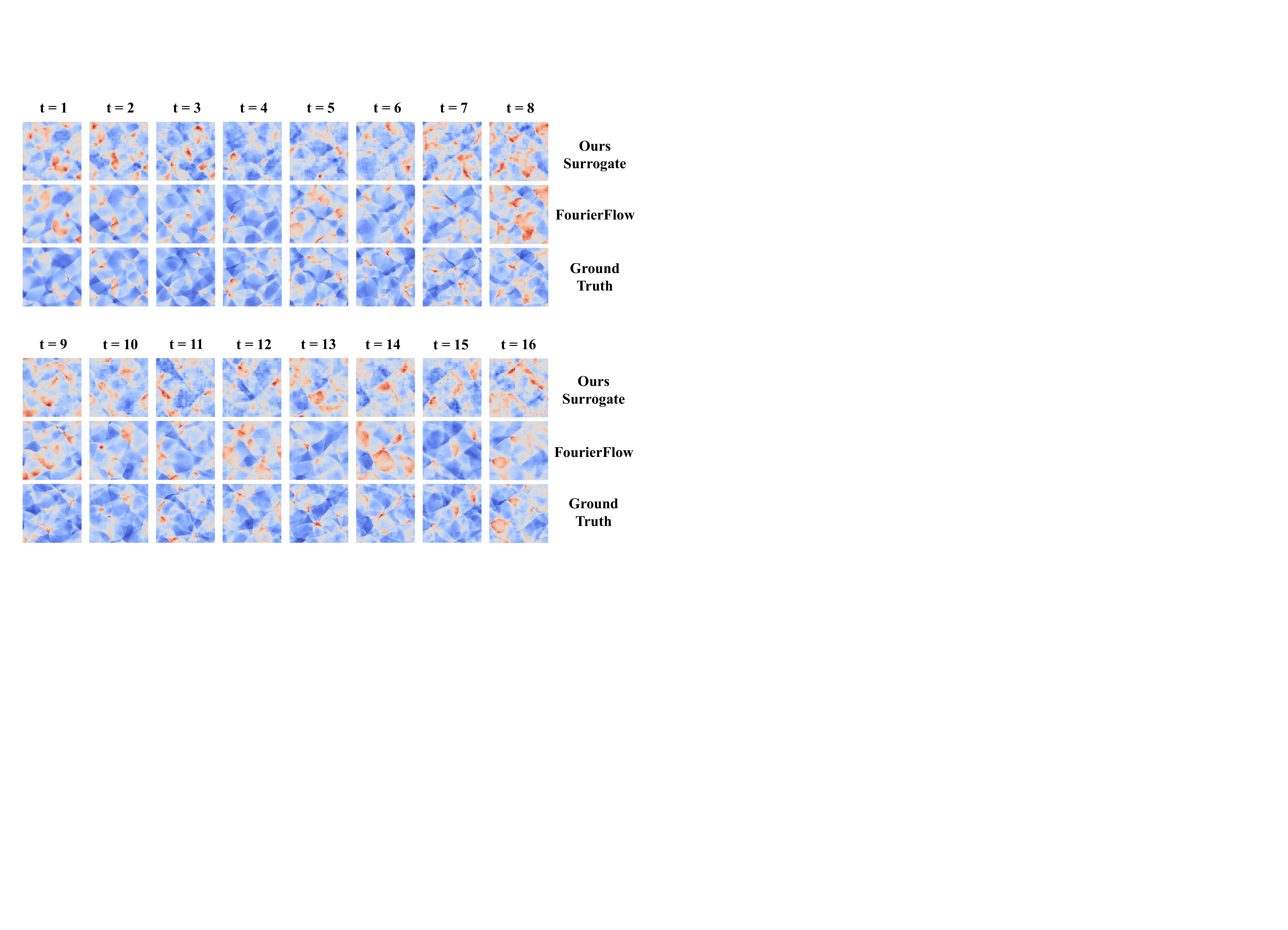}
  \caption{Visualization of the pressure fields produced by two methods for the compressible N-S.
  }   
  \label{fig:vis_2}
\end{figure*}

\begin{figure*}
  \centering
  \includegraphics[width=\linewidth]{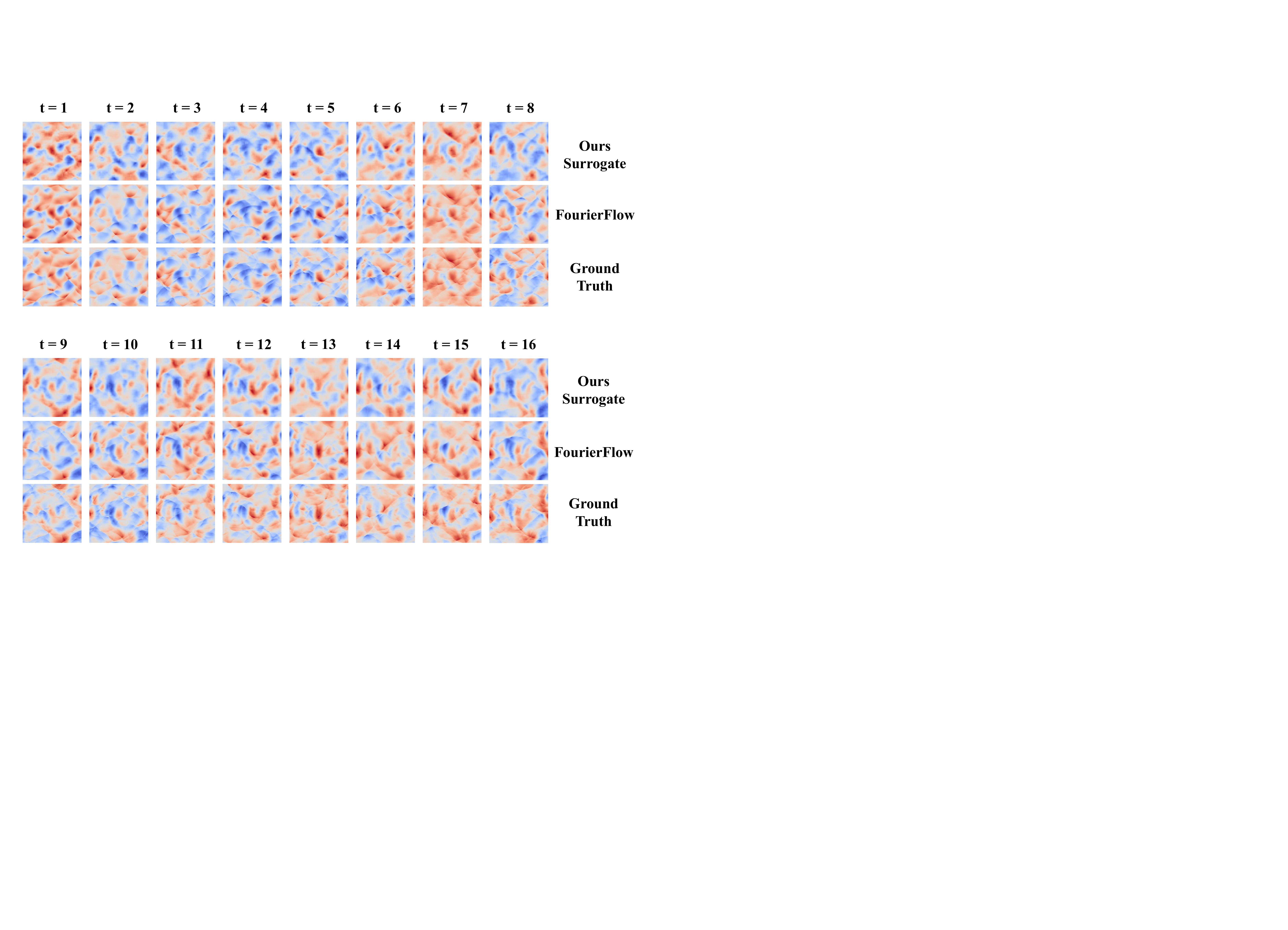}
  \caption{Visualization of the velocity-x fields produced by two methods for the compressible N-S.
  }   
  \label{fig:vis_3}
\end{figure*}

\begin{figure*}
  \centering
  \includegraphics[width=\linewidth]{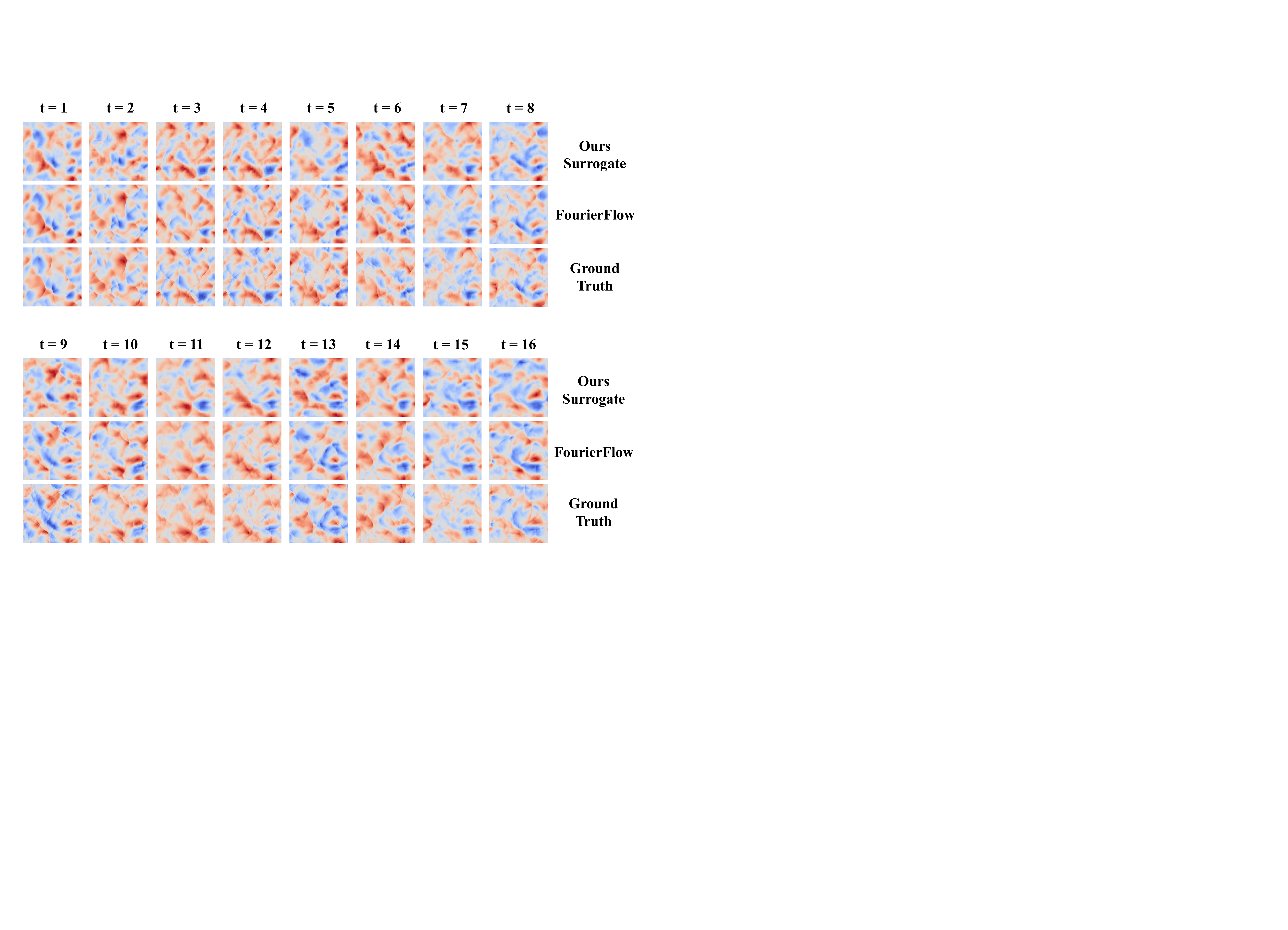}
  \caption{Visualization of the velocity-y fields produced by two methods for the compressible N-S.
  }   
  \label{fig:vis_4}
\end{figure*}

\begin{figure*}
  \centering
  \includegraphics[width=\linewidth]{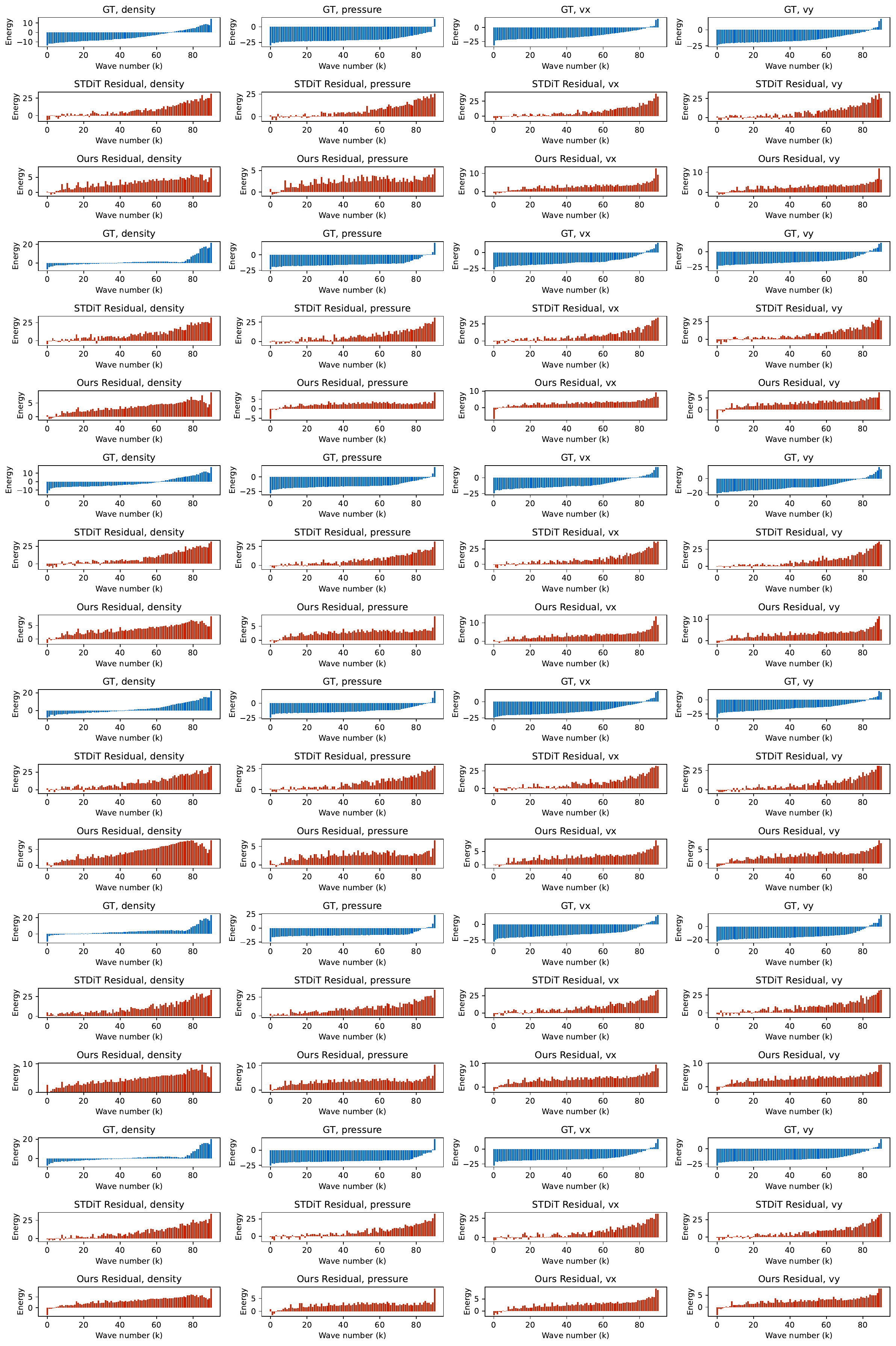}
  \caption{Visualization on wavenumber-energy trend.
  }   
  \label{fig:spectrum}
\end{figure*}

\end{document}